\theoremstyle{plain}
\newtheorem{theorem}{Theorem}[section]
\newtheorem{proposition}[theorem]{Proposition}
\newtheorem{lemma}[theorem]{Lemma}
\theoremstyle{definition}
\theoremstyle{remark}
\theoremstyle{plain}
\newenvironment{propositioncopy}[1]{\innercustomprop}{\endinnercustomprop}
\newcommand{\cognn}{\textsc{Co-GNN}\xspace}
\newcommand{\cognns}{\textsc{Co-GNN}\text{s}\xspace}
\newcommand{\sumgnn}{\textsc{SumGNN}\xspace}
\newcommand{\meangnn}{\textsc{MeanGNN}\xspace}
\newcommand{\sumgnns}{\textsc{SumGNN}\text{s}\xspace}
\newcommand{\meangnns}{\textsc{MeanGNN}\text{s}\xspace}
\newcommand{\cosum}{\textsc{Co-GNN}(\Sigma,\Sigma)\xspace}
\newcommand{\comean}{\textsc{Co-GNN}(\mu,\mu)\xspace}
\newcommand{\comeansum}{\textsc{Co-GNN}(\Sigma,\mu)\xspace}
\newcommand{\cogatsum}{\textsc{Co-GNN}(\Sigma,\alpha)\xspace}
\newcommand{\cogin}{\textsc{Co-GNN}(\epsilon,\epsilon)\xspace}
\newcommand{\cogcn}{\textsc{Co-GNN}(*,*)\xspace}
\newcommand{\rootn}{\textsc{RootNeighbors}\xspace}
\newcommand{\cycles}{\textsc{Cycles}\xspace}
\newcommand{\stdfont}{\fontsize{8pt}{9}\selectfont}
\newcommand\stepl{{\left(\ell\right)}}
\newcommand\steplplus{{\left(\ell + 1\right)}}
\newcommand{\isolatelong}{\textsc{Isolate}\xspace}
\newcommand{\listenlong}{\textsc{Listen}\xspace}
\newcommand{\broadcastlong}{\textsc{Broadcast}\xspace}
\newcommand{\standardlong}{\textsc{Standard}\xspace}
\newcommand{\isolate}{\textsc{I}\xspace}
\newcommand{\listen}{\textsc{L}\xspace}
\newcommand{\broadcast}{\textsc{B}\xspace}
\newcommand{\standard}{\textsc{S}\xspace}
\newcommand{\env}{{\eta}}
\newcommand{\action}{{\pi}}
\newcommand\red[1]{\textcolor{red}{#1}}  % first
\newcommand\blue[1]{\textcolor{blue}{#1}}  % second
\newcommand\gray[1]{\textcolor{gray}{#1}}  %third
\newcommand\draft[1]{\textcolor{blue}{#1}}  %revised
\def\eqref#1{equation~\ref{#1}}
\def\1{\bm{1}}
\def\vg{{\bm{g}}}
\def\vh{{\bm{h}}}
\def\vp{{\bm{p}}}
\def\vq{{\bm{q}}}
\def\vx{{\bm{x}}}
\def\vz{{\bm{z}}}
\def\mW{{\bm{W}}}
\def\mX{{\bm{X}}}
\DeclareMathAlphabet{\mathsfit}{\encodingdefault}{\sfdefault}{m}{sl}
\SetMathAlphabet{\mathsfit}{bold}{\encodingdefault}{\sfdefault}{bx}{n}
\def\sR{{\mathbb{R}}}
\icmltitlerunning{Cooperative Graph Neural Networks}
\begin{document}

\twocolumn[
\icmltitle{Cooperative Graph Neural Networks}

% It is OKAY to include author information, even for blind
% submissions: the style file will automatically remove it for you
% unless you've provided the [accepted] option to the icml2024
% package.

% List of affiliations: The first argument should be a (short)
% identifier you will use later to specify author affiliations
% Academic affiliations should list Department, University, City, Region, Country
% Industry affiliations should list Company, City, Region, Country

% You can specify symbols, otherwise they are numbered in order.
% Ideally, you should not use this facility. Affiliations will be numbered
% in order of appearance and this is the preferred way.
\icmlsetsymbol{equal}{*}

\begin{icmlauthorlist}
\icmlauthor{Ben Finkelshtein}{oxford}
\icmlauthor{Xingyue Huang}{oxford}
\icmlauthor{Michael Bronstein}{oxford}
\icmlauthor{{\.I}smail {\.I}lkan Ceylan}{oxford}
\end{icmlauthorlist}

\icmlaffiliation{oxford}{Department of Computer Science, University of Oxford}

\icmlcorrespondingauthor{name surname}{\{name.surname\}@cs.ox.ac.uk}

% You may provide any keywords that you
% find helpful for describing your paper; these are used to populate
% the "keywords" metadata in the PDF but will not be shown in the document
\icmlkeywords{graph neural networks, dynamic message passing, information flow}

\vskip 0.3in
]

% this must go after the closing bracket ] following \twocolumn[ ...

% This command actually creates the footnote in the first column
% listing the affiliations and the copyright notice.
% The command takes one argument, which is text to display at the start of the footnote.
% The \icmlEqualContribution command is standard text for equal contribution.
% Remove it (just {}) if you do not need this facility.

\printAffiliationsAndNotice{}  % leave blank if no need to mention equal contribution
%\printAffiliationsAndNotice{\icmlEqualContribution} % otherwise use the standard text.

\begin{abstract}
Graph neural networks are popular architectures for graph machine learning, based on iterative computation of node representations of an input graph through a series of invariant transformations. A large class of graph neural networks follow a standard message-passing paradigm: at every layer, each node state is updated based on an aggregate of messages from its neighborhood. In this work, we propose a novel framework for training graph neural networks, where every node is viewed as a \emph{player} that can choose to either `listen’, `broadcast’, `listen and broadcast', or to `isolate’. The standard message propagation scheme can then be viewed as a special case of this framework where every node `listens and broadcasts' to all neighbors. Our approach offers a more flexible and dynamic message-passing paradigm, where each node can determine its own strategy based on their state,  effectively exploring the graph topology while learning. We provide a theoretical analysis of the new message-passing scheme which is further supported by an extensive empirical analysis on synthetic and real-world data.
\end{abstract}

\section{Introduction}

Graph neural networks (GNNs) \citep{Scarselli09,Gori2005} are a class of architectures for learning on graph-structured data. Their success in various graph machine learning (ML) tasks~\citep{Shlomi21,DuvenaudMABHAA15,ZitnikAL18} has led to a surge of different architectures~\citep{Kipf16,xu18,velic2018graph,hamilton2017inductive}. 
The vast majority of GNNs can be implemented through \emph{message-passing}, where the fundamental idea is to update each node's representation based on an aggregate of messages flowing from the node's neighbors \cite{GilmerSRVD17}. 

The message-passing paradigm has been very influential in graph ML, but it also comes with well-known limitations related to the information flow on a graph, pertaining to \emph{long-range} dependencies \cite{dwivedi2023long}. 
In order to receive information from $k$-hop neighbors, a network needs at least $k$ layers, which typically implies an exponential growth of a node's receptive field. The growing amount of information needs then to be compressed into fixed-sized node embeddings, possibly leading to information loss, referred to as {\em over-squashing}~\citep{Alon-ICLR21}.  Another well-known limitation related the information flow is \emph{over-smoothing} \cite{LiHW18}: the node features can become increasingly similar as the number of layers increases.

\textbf{Motivation.} Our goal is to generalize the message-passing scheme by allowing each node to decide how to propagate information {\em from} or {\em to} its neighbors, thus enabling a more flexible flow of information. 
Consider the example depicted in \Cref{fig:layerwise-flow}, where the top row shows the information flow relative to the node $u$ across three layers, and the bottom row shows the information flow relative to the node $v$ across three layers. Node $u$ listens to every neighbor in the first layer, only to $v$ in the second layer, and to nodes $s$ and $r$ in the last layer. On the other hand, node $v$ listens to node $w$ for the first two layers, and to node $u$ in the last layer. To realize this scenario, each node should be able to decide whether or not to listen to a particular node at each layer: a {\em dynamic} and {\em asynchronous} message-passing scheme, which clearly falls outside of standard message-passing. 

\textbf{Approach.}
To achieve this goal, we regard each node as a {\em player} that can take the following actions in each layer: 
\begin{enumerate}[noitemsep,topsep=0pt,parsep=1pt,partopsep=0pt,leftmargin=*, label={\small \textbullet}]
    \item \standardlong(\standard): Broadcast to neighbors that listen \emph{and} listen to neighbors that broadcast.
    \item \listenlong(\listen): Listen to neighbors that broadcast.
    \item \broadcastlong(\broadcast): Broadcast to neighbors that listen.
    \item  \isolatelong(\isolate): Neither listen nor broadcast.
\end{enumerate}

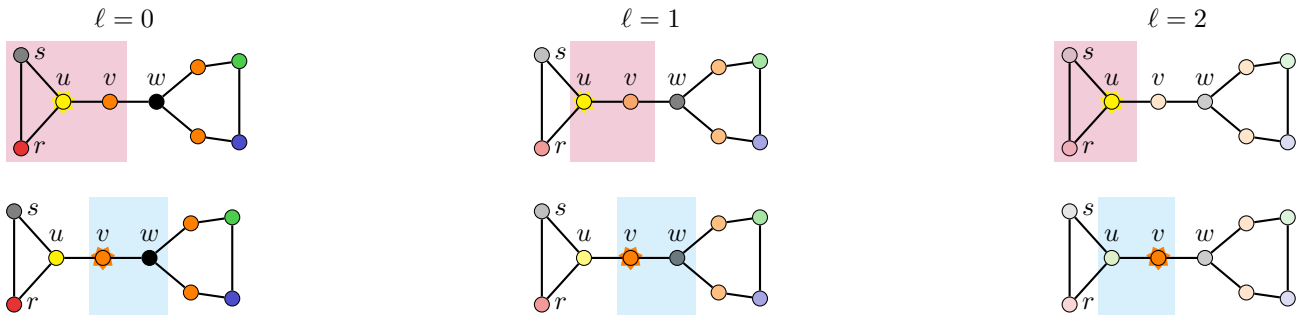
\begin{figure*}[t!]
    \centering
    \begin{tikzpicture} 	 	
        \definecolor{blue}{rgb}{0.3,0.3, 0.8}
        \definecolor{green}{rgb}{0.3,0.8, 0.3}
        \definecolor{red}{rgb}{0.9,0.2, 0.2}
        \definecolor{gray3}{rgb}{0.9,0.9,.9}
        \definecolor{gray2}{rgb}{0.3,0.3,0.3}    
    
        \node[] (l1) at (-7, 0) {$\ell=0$};  % layers
        \node[] (l2) at (0, 0) {$\ell=1$};  
        \node[] (l3) at (7, 0) {$\ell=2$};  
    \end{tikzpicture}
    \hspace{7em}
    %%%%%%%%%%%%%%%%%%%%%%%%%%%%%%%%%%%%%%%%%%%%
    \vspace{.3cm}
    \begin{tikzpicture}[
        vertex/.style = {draw,circle,inner sep=2pt, minimum height= 5pt},
        directed/.style = {thick, black!100, >=stealth, ->},
        bi/.style = {thick,black!100, stealth-stealth},
        undirected/.style = {thick,black!100, >=stealth,-}
        ]
        
        \definecolor{blue}{rgb}{0.3,0.3, 0.8}
        \definecolor{green}{rgb}{0.3,0.8, 0.3}
        \definecolor{red}{rgb}{0.9,0.2, 0.2}
        \definecolor{gray3}{rgb}{0.9,0.9,.9}
        \definecolor{gray2}{rgb}{0.3,0.3,0.3}

        \draw[draw=none, fill=purple!50,fill opacity=0.4] (-0.4,0.8) rectangle (-2.0,-0.8); 
        
        % node name
        \node[] (s) at (-1.55, 0.65) {$s$};
        \node[] (r) at (-1.55, -0.6) {$r$};
        \node[] (u) at (-1.24, 0.3) {$u$};  
        \node[] (v) at (-0.62, 0.3) {$v$};
        \node[] (w) at (0, 0.3) {$w$};
        
            % nodes
        \node[vertex, fill=red] (r) at (-1.8, -0.62) {};
        \node[vertex, fill=gray] (s) at (-1.8, 0.62) {};
        %star
        \node[star,star points=7,star point ratio=0.7, fill=yellow] (u) at (-1.24, 0) {};
        \node[vertex] (u) at (-1.24, 0) {};
        \node[vertex, fill=orange] (v) at (-0.62, 0) {};
        \node[vertex, fill=black] (w) at (0, 0) {};
        \node[vertex, fill=orange] (w_top_right) at (0.55, 0.46) {};
        \node[vertex, fill=orange] (w_bot_right) at (0.55, -0.46) {};
            \node[vertex, fill=green] (rightmost_top) at (1.1, 0.54) {};
        \node[vertex, fill=blue] (rightmost_bot) at (1.1, -0.54) {};  
        
            % Edges
        \draw[undirected] (s) edge (r); 
        \draw[undirected] (s) edge (u); 
        \draw[undirected] (r) edge (u); 
        \draw[undirected] (u) edge (v);
        \draw[undirected] (v) edge (w); 
        \draw[undirected] (w) edge (w_top_right);
        \draw[undirected] (w) edge (w_bot_right); 
        \draw[undirected] (w_top_right) edge (rightmost_top); 
        \draw[undirected] (w_bot_right) edge (rightmost_bot); 
        \draw[undirected] (rightmost_top) edge (rightmost_bot);
    \end{tikzpicture}
    \hfill
    \begin{tikzpicture}[
        vertex/.style = {draw,circle,inner sep=2pt, minimum height= 5pt, fill opacity=0.5},
        directed/.style = {thick, black!100, >=stealth, ->},
        bi/.style = {thick,black!100, stealth-stealth},
        undirected/.style = {thick,black!100, >=stealth,-}
        ]
        
        \definecolor{blue}{rgb}{0.3,0.3, 0.8}
        \definecolor{green}{rgb}{0.3,0.8, 0.3}
        \definecolor{red}{rgb}{0.9,0.2, 0.2}
        \definecolor{gray3}{rgb}{0.9,0.9,.9}
        \definecolor{gray2}{rgb}{0.3,0.3,0.3}

        \draw[draw=none, fill=purple!50,fill opacity=0.4] (-0.3,0.8) rectangle (-1.42,-0.8); 
        
        % node name
        \node[] (s) at (-1.55, 0.65) {$s$};
        \node[] (r) at (-1.55, -0.6) {$r$};
        \node[] (u) at (-1.24, 0.3) {$u$};  
        \node[] (v) at (-0.62, 0.3) {$v$};
        \node[] (w) at (0, 0.3) {$w$};
        
            % nodes
        \node[vertex, fill=red] (r) at (-1.8, -0.62) {};
        \node[vertex, fill=gray] (s) at (-1.8, 0.62) {};
        %star
        \node[star,star points=7,star point ratio=0.6, fill=yellow] (u) at (-1.24, 0) {};
        \node[vertex, fill=yellow] (u) at (-1.24, 0) {};
        \node[vertex, fill=orange] (v) at (-0.62, 0) {};
        \node[vertex, fill=black] (w) at (0, 0) {};
        \node[vertex, fill=orange] (w_top_right) at (0.55, 0.46) {};
        \node[vertex, fill=orange] (w_bot_right) at (0.55, -0.46) {};
            \node[vertex, fill=green] (rightmost_top) at (1.1, 0.54) {};
        \node[vertex, fill=blue] (rightmost_bot) at (1.1, -0.54) {};  
        
            % Edges
        \draw[undirected] (s) edge (r); 
        \draw[undirected] (s) edge (u); 
        \draw[undirected] (r) edge (u); 
        \draw[undirected] (u) edge (v);
        \draw[undirected] (v) edge (w); 
        \draw[undirected] (w) edge (w_top_right);
        \draw[undirected] (w) edge (w_bot_right); 
        \draw[undirected] (w_top_right) edge (rightmost_top); 
        \draw[undirected] (w_bot_right) edge (rightmost_bot); 
        \draw[undirected] (rightmost_top) edge (rightmost_bot);
    \end{tikzpicture}
    \hfill
    \begin{tikzpicture}[
        vertex/.style = {draw,circle,inner sep=2pt, minimum height= 5pt, fill opacity=0.2},
        directed/.style = {thick, black!100, >=stealth, ->},
        bi/.style = {thick,black!100, stealth-stealth},
        undirected/.style = {thick,black!100, >=stealth,-}
        ]
        
        \definecolor{blue}{rgb}{0.3,0.3, 0.8}
        \definecolor{green}{rgb}{0.3,0.8, 0.3}
        \definecolor{red}{rgb}{0.9,0.2, 0.2}
        \definecolor{gray3}{rgb}{0.9,0.9,.9}
        \definecolor{gray2}{rgb}{0.3,0.3,0.3}

        \draw[draw=none, fill=purple!50,fill opacity=0.4] (-0.9,0.8) rectangle (-2.0,-0.8); 
        
        % node name
        \node[] (s) at (-1.55, 0.65) {$s$};
        \node[] (r) at (-1.55, -0.6) {$r$};
        \node[] (u) at (-1.24, 0.3) {$u$};  
        \node[] (v) at (-0.62, 0.3) {$v$};
        \node[] (w) at (0, 0.3) {$w$};
        
            % nodes
        \node[vertex, fill=red] (r) at (-1.8, -0.62) {};
        \node[vertex, fill=gray] (s) at (-1.8, 0.62) {};
        %star
        \node[star,star points=7,star point ratio=0.7, fill=yellow] (u) at (-1.24, 0) {};  
         \node[vertex, fill=yellow] (u) at (-1.24, 0) {};
        \node[vertex, fill=orange] (v) at (-0.62, 0) {};
        \node[vertex, fill=black] (w) at (0, 0) {};
        \node[vertex, fill=orange] (w_top_right) at (0.55, 0.46) {};
        \node[vertex, fill=orange] (w_bot_right) at (0.55, -0.46) {};
            \node[vertex, fill=green] (rightmost_top) at (1.1, 0.54) {};
        \node[vertex, fill=blue] (rightmost_bot) at (1.1, -0.54) {};  
        
            % Edges
        \draw[undirected] (s) edge (r); 
        \draw[undirected] (s) edge (u); 
        \draw[undirected] (r) edge (u); 
        \draw[undirected] (u) edge (v);
        \draw[undirected] (v) edge (w); 
        \draw[undirected] (w) edge (w_top_right);
        \draw[undirected] (w) edge (w_bot_right); 
        \draw[undirected] (w_top_right) edge (rightmost_top); 
        \draw[undirected] (w_bot_right) edge (rightmost_bot); 
        \draw[undirected] (rightmost_top) edge (rightmost_bot);
    \end{tikzpicture}
    \vfill
    \vspace{.1cm}
%%%%%%%%%%%%%%%%%%%%%%%%%%%%%%%%%%%%%%%%%%%%
    \begin{tikzpicture}[
        vertex/.style = {draw,circle,inner sep=2pt, minimum height= 5pt},
        directed/.style = {thick, black!100, >=stealth, ->},
        bi/.style = {thick,black!100, stealth-stealth},
        undirected/.style = {thick,black!100, >=stealth,-}
        ]
        
        \definecolor{blue}{rgb}{0.3,0.3, 0.8}
        \definecolor{green}{rgb}{0.3,0.8, 0.3}
        \definecolor{red}{rgb}{0.9,0.2, 0.2}
        \definecolor{gray3}{rgb}{0.9,0.9,.9}
        \definecolor{gray2}{rgb}{0.3,0.3,0.3}

        \draw[draw=none, fill=cyan!20,fill opacity=0.7] (0.25,0.8) rectangle (-0.8,-0.8); 
        
        % node name
        \node[] (s) at (-1.55, 0.65) {$s$};
        \node[] (r) at (-1.55, -0.6) {$r$};
        \node[] (u) at (-1.24, 0.3) {$u$};  
        \node[] (v) at (-0.62, 0.3) {$v$};
        \node[] (w) at (0, 0.3) {$w$};
        
            % nodes
        \node[vertex, fill=red] (r) at (-1.8, -0.62) {};
        \node[vertex, fill=gray] (s) at (-1.8, 0.62) {};
        \node[vertex, fill=yellow] (u) at (-1.24, 0) {};
         %star
        \node[star,star points=7,star point ratio=0.7, fill=orange] (v) at (-0.62, 0) {};  
        \node[vertex] (v) at (-0.62, 0) {};
        \node[vertex, fill=black] (w) at (0, 0) {};
        \node[vertex, fill=orange] (w_top_right) at (0.55, 0.46) {};
        \node[vertex, fill=orange] (w_bot_right) at (0.55, -0.46) {};
            \node[vertex, fill=green] (rightmost_top) at (1.1, 0.54) {};
        \node[vertex, fill=blue] (rightmost_bot) at (1.1, -0.54) {};  
        
            % Edges
        \draw[undirected] (s) edge (r); 
        \draw[undirected] (s) edge (u); 
        \draw[undirected] (r) edge (u); 
        \draw[undirected] (u) edge (v);
        \draw[undirected] (v) edge (w); 
        \draw[undirected] (w) edge (w_top_right);
        \draw[undirected] (w) edge (w_bot_right); 
        \draw[undirected] (w_top_right) edge (rightmost_top); 
        \draw[undirected] (w_bot_right) edge (rightmost_bot); 
        \draw[undirected] (rightmost_top) edge (rightmost_bot);
    \end{tikzpicture}
    \hfill
    \begin{tikzpicture}[
        vertex/.style = {draw,circle,inner sep=2pt, minimum height= 5pt, fill opacity=0.5},
        directed/.style = {thick, black!100, >=stealth, ->},
        bi/.style = {thick,black!100, stealth-stealth},
        undirected/.style = {thick,black!100, >=stealth,-}
        ]
        
        \definecolor{blue}{rgb}{0.3,0.3, 0.8}
        \definecolor{green}{rgb}{0.3,0.8, 0.3}
        \definecolor{red}{rgb}{0.9,0.2, 0.2}
        \definecolor{gray3}{rgb}{0.9,0.9,.9}
        \definecolor{gray2}{rgb}{0.3,0.3,0.3}

        \draw[draw=none, fill=cyan!20,fill opacity=0.7] (0.25,0.8) rectangle (-0.8,-0.8); 
        
        % node name
        \node[] (s) at (-1.55, 0.65) {$s$};
        \node[] (r) at (-1.55, -0.6) {$r$};
        \node[] (u) at (-1.24, 0.3) {$u$};  
        \node[] (v) at (-0.62, 0.3) {$v$};
        \node[] (w) at (0, 0.3) {$w$};
        
            % nodes
        \node[vertex, fill=red] (r) at (-1.8, -0.62) {};
        \node[vertex, fill=gray] (s) at (-1.8, 0.62) {};
        \node[vertex, fill=yellow] (u) at (-1.24, 0) {};
        % star
        \node[star,star points=7,star point ratio=0.7, fill=orange] (v) at (-0.62, 0) {};  
        \node[vertex, fill=orange] (v) at (-0.62, 0) {};
        \node[vertex, fill=black] (w) at (0, 0) {};
        \node[vertex, fill=orange] (w_top_right) at (0.55, 0.46) {};
        \node[vertex, fill=orange] (w_bot_right) at (0.55, -0.46) {};
            \node[vertex, fill=green] (rightmost_top) at (1.1, 0.54) {};
        \node[vertex, fill=blue] (rightmost_bot) at (1.1, -0.54) {};  
        
            % Edges
        \draw[undirected] (s) edge (r); 
        \draw[undirected] (s) edge (u); 
        \draw[undirected] (r) edge (u); 
        \draw[undirected] (u) edge (v);
        \draw[undirected] (v) edge (w); 
        \draw[undirected] (w) edge (w_top_right);
        \draw[undirected] (w) edge (w_bot_right); 
        \draw[undirected] (w_top_right) edge (rightmost_top); 
        \draw[undirected] (w_bot_right) edge (rightmost_bot); 
        \draw[undirected] (rightmost_top) edge (rightmost_bot);
    \end{tikzpicture}
    \hfill
    \begin{tikzpicture}[
        vertex/.style = {draw,circle,inner sep=2pt, minimum height= 5pt, fill opacity=0.2},
        directed/.style = {thick, black!100, >=stealth, ->},
        bi/.style = {thick,black!100, stealth-stealth},
        undirected/.style = {thick,black!100, >=stealth,-}
        ]
        
        \definecolor{blue}{rgb}{0.3,0.3, 0.8}
        \definecolor{green}{rgb}{0.3,0.8, 0.3}
        \definecolor{red}{rgb}{0.9,0.2, 0.2}
        \definecolor{gray3}{rgb}{0.9,0.9,.9}
        \definecolor{gray2}{rgb}{0.3,0.3,0.3}

        \draw[draw=none, fill=cyan!20,fill opacity=0.7] (-0.4,0.8) rectangle (-1.42,-0.8); 
        
        % node name
        \node[] (s) at (-1.55, 0.65) {$s$};
        \node[] (r) at (-1.55, -0.6) {$r$};
        \node[] (u) at (-1.24, 0.3) {$u$};  
        \node[] (v) at (-0.62, 0.3) {$v$};
        \node[] (w) at (0, 0.3) {$w$};
        
            % nodes
        \node[vertex, fill=red] (r) at (-1.8, -0.62) {};
        \node[vertex, fill=gray] (s) at (-1.8, 0.62) {};
        \node[vertex, fill=yellow] (u) at (-1.24, 0) {};
         % star
        \node[star,star points=7,star point ratio=0.7, fill=orange] (v) at (-0.62, 0) {};  
        \node[vertex, fill=orange] (v) at (-0.62, 0) {};
        \node[vertex, fill=black] (w) at (0, 0) {};
        \node[vertex, fill=orange] (w_top_right) at (0.55, 0.46) {};
        \node[vertex, fill=orange] (w_bot_right) at (0.55, -0.46) {};
            \node[vertex, fill=green] (rightmost_top) at (1.1, 0.54) {};
        \node[vertex, fill=blue] (rightmost_bot) at (1.1, -0.54) {};  
        
            % Edges
        \draw[undirected] (s) edge (r); 
        \draw[undirected] (s) edge (u); 
        \draw[undirected] (r) edge (u); 
        \draw[undirected] (u) edge (v);
        \draw[undirected] (v) edge (w); 
        \draw[undirected] (w) edge (w_top_right);
        \draw[undirected] (w) edge (w_bot_right); 
        \draw[undirected] (w_top_right) edge (rightmost_top); 
        \draw[undirected] (w_bot_right) edge (rightmost_bot); 
        \draw[undirected] (rightmost_top) edge (rightmost_bot);
    \end{tikzpicture}
    \caption{Example information flow for nodes $u,v$. \textbf{Top}: information flow relative to $u$ across three layers. Node $u$  listens to every neighbor in the first layer, but only to $v$ in the second layer, and only to $s$ and $r$ in the last layer. \textbf{Bottom}: information flow relative to $v$ across three layers.  The node $v$ listens only to $w$ in the first two layers, and only to  $u$ in the last layer.}
    \label{fig:layerwise-flow}
\end{figure*}
When all nodes perform the action \standardlong, we recover the standard message-passing. Conversely, having all the nodes \isolatelong corresponds to removing all the edges from the graph implying node-wise predictions. The interplay between these actions and the ability to change them {\em dynamically} makes the overall approach richer and allows to decouple the input graph from the computational one and incorporate directionality into message-passing: a node can only listen to those neighbors that are currently broadcasting, and vice versa. 
We can emulate the example from \Cref{fig:layerwise-flow} by making $u$ choose the actions $\langle\listen,\listen,\standard\rangle$, $v$ and $w$ the actions $\langle\standard,\standard,\listen\rangle$, and  $s$ and $r$ the actions $\langle\standard,\isolate,\standard\rangle$.

\textbf{Contributions.}
We develop a new class of architectures, dubbed \emph{cooperative graph neural networks}~(\cognns), where every node in the graph is viewed as a player that can perform one of the aforementioned actions. \cognns comprise two jointly trained ``cooperating'' message-passing neural networks: an \emph{environment network} $\env$ (for solving the given task), and an \emph{action network} $\action$ (for choosing the best actions). 
Our contributions can be summarized as follows:
\begin{enumerate}[noitemsep,topsep=0pt,parsep=1pt,partopsep=0pt,leftmargin=*, label={\small \textbullet}]
    \item We propose a novel message-passing mechanism, which leads to \cognn architectures that effectively explore the graph topology while learning (\Cref{sec:coop}).
    \item  We provide a detailed discussion on the properties of \cognns (\Cref{subsec:conceptual}) and show that they are more expressive than 1-dimensional Weisfeiler-Leman algorithm~(1-WL) (\Cref{subsec:expresivity}), and better suited for long-range tasks due to their adaptive nature (\Cref{subsec:long-range}).
    \item  Empirically, we focus on \cognns with \emph{basic} action and environment networks to carefully assess the virtue of the new message-passing paradigm. We first validate the strength of our approach on a synthetic task (\Cref{subsec:synthetic}). Then, we conduct experiments on real-world datasets, and observe that \cognns always improve compared to their baseline models, and yield multiple state-of-the-art results (\Cref{subsec:node-classification,app:graph-classification}).  
    \item We compare the trend of the actions on homophilic and heterophilic graphs (\Cref{sec:hh}); visualize the actions on a heterophilic graph (\Cref{subsec:visualize}); and ablate on the choices of action and environment networks  (\Cref{subsec:ablation}). We complement these with experiments related to expressive power~(\Cref{app:cycles}), long-range tasks (\Cref{app:long-range-exp}), and over-smoothing (\Cref{app:oversmoothing}).
\end{enumerate}

Additional details can be found in the appendix of this paper.

\section{Background}

\textbf{Graph Neural Networks.} We consider simple, undirected attributed graphs $G=(V, E, \mX)$, where $\mX \in \sR^{|V|\times d}$ is a matrix of (input) node features, and  $\vx_v\in \sR^d$ denotes the feature of a node  $v\in V$. We focus on \emph{message-passing neural networks (MPNNs)}~\citep{GilmerSRVD17} that encapsulate the vast majority of GNNs. An MPNN updates the initial node representations $\vh_{v}^{(0)}=\vx_{v}$ of each node $v$ for $0 \leq \ell \leq L - 1$ iterations based on its own state and the state of its neighbors $\mathcal{N}_v$ as:
\begin{align*}
	\vh_v^\steplplus =\phi^{(\ell)}  \left(
	\vh_v^\stepl, 
	\psi^{(\ell)}\left(\vh_v^\stepl, \{\!\!\{\vh_u^\stepl\mid u\in\mathcal{N}_v\}\!\!\}\right)\right),
\end{align*}
where $\{\!\!\{\cdot\}\!\!\}$ denotes a multiset and $\phi^{(\ell)}$ and $\psi^{(\ell)}$ are differentiable \emph{update} and \emph{aggregation} functions, respectively. We denote by $d^{(\ell)}$ the dimension of the node embeddings at iteration (layer) $\ell$. The final representations $\vh_v^{\left(L\right)}$ of each node $v$ can be used for predicting node-level properties or they can be pooled to form a graph embedding vector $\vz_G^{\left(L\right)}$, which can be used for predicting graph-level properties. The pooling often takes the form of simple averaging, summation, or element-wise maximum. 
Of particular interest to us are the basic MPNNs: 
\begin{equation*}
\vh_v^\steplplus = \sigma\left(\mW^\stepl_{s}\vh_v^\stepl + \mW^\stepl_{n}\psi\left(\{\!\!\{\vh_u^\stepl\mid u\in\mathcal{N}_v\}\!\!\}
\right)\right),
\end{equation*}
where $\mW^\stepl_s$ and  $\mW^\stepl_n$ are
$d^\stepl\times d^\steplplus$ 
learnable parameter matrices acting on the node's self-representation and on the aggregated representation of its neighbors, respectively, $\sigma$ is a non-linearity, and $\psi$ is either \emph{mean} or \emph{sum} aggregation function. We refer to the architecture with mean aggregation as \meangnns and to the architecture with sum aggregation as \sumgnns \citep{hamilton2020graph}. We also consider prominent models such as GCN \citep{Kipf16}, GIN \citep{xu18} and GAT \citep{velic2018graph}.

\textbf{Straight-through Gumbel-softmax Estimator.} In our approach, we rely on an action network for predicting categorical actions for the nodes in the graph, which is not differentiable and poses a challenge for gradient-based optimization. One prominent approach to address this is given by the Gumbel-softmax estimator \citep{jang2017categorical, maddison2017concrete} which effectively provides a differentiable, continuous approximation of discrete action sampling.
Consider a finite set $\Omega$ of actions. We are interested in learning a categorical distribution over $\Omega$, which can be represented in terms of a probability vector $\vp \in \sR^{\lvert \Omega \rvert}$ whose elements store the probabilities of different actions.  Let us denote by $\vp(a)$ the probability of an action $a \in \Omega$. Gumbel-softmax is a special reparametrization trick that estimates the categorical distribution $\vp\in\sR^{\lvert \Omega \rvert}$ with the help of a Gumbel-distributed vector $\vg\in\sR^{\lvert \Omega \rvert}$, which stores an i.i.d.\ sample $\vg(a)\sim\textsc{Gumbel}(0,1)$ for each action $a$.
Given a categorical distribution $\vp$ and a temperature parameter $\tau$, Gumbel-softmax ($\operatorname{GS}$) scores can be computed as follows:
\begin{equation*}
    \operatorname{GS}\left(\vp ;\tau\right)=
    \frac{
        \exp \left( {\left(\log(\vp)+\vg  \right)}/{\tau} \right)
    }
    {
        {\sum_{a\in \Omega} \exp \left( {\left( \log(\vp(a))+\vg(a) \right)} / \tau \right)}
    }
\end{equation*}
As the softmax temperature $\tau$ decreases, the resulting vector tends to a \emph{one-hot} vector. Straight-through GS estimator utilizes the GS estimator during the backward pass only (for a differentiable update), while during the forward pass, it employs an ordinary sampling.

\section{Related Work}
Most of GNNs operate by message-passing \citep{GilmerSRVD17}, including architectures such as  GCNs~\citep{Kipf16}, GIN~\citep{xu18}, GAT~\citep{velic2018graph}, and GraphSAGE~\citep{hamilton2017inductive}. 
Despite their success, MPNNs have some known limitations.

First, the expressive power of MPNNs is upper bounded by 1-WL~\citep{xu18,MorrisAAAI19}. This motivated the study of more expressive architectures, based on higher-order structures~\citep{MorrisAAAI19,MaronBSL19,KerivenP19}, subgraph~\citep{BevilacquaFLSCBBM22,ThiedeZK21} or homomorphism counting~\citep{BarceloGRR21,JBCL-ICML24},  node features with unique identifiers~\citep{Loukas20}, or random features~\citep{AbboudCGL21,SatoSDM2020}.

Second, MPNNs perform poorly on long-range tasks due to their information propagation bottlenecks such as \emph{over-squashing}~\citep{Alon-ICLR21} and \emph{over-smoothing} \cite{LiHW18}.
The former limitation motivated approaches based on rewiring the graph~\citep{KlicperaWG19,topping2022understanding, karhadkar2023fosr} by connecting relevant nodes and shortening propagation distances, or designing new message-passing architectures that act on distant nodes directly, e.g., using shortest-path distances \citep{AbboudDC22,YingCLZKHSL21}. The over-smoothing problem has also motivated a body of work to avoid the collapse of node features~\cite{zhao2019pairnorm,chen2020simple}.

Finally, classical message passing updates the nodes in a fixed and synchronous manner, which does not allow the nodes to react to messages from their neighbors individually. This has been recently argued as yet another limitation of classical message passing from the perspective of algorithmic alignment \citep{faber2022asynchronous}.

Our approach presents new perspectives on these limitations via a dynamic and asynchronous information flow (see \Cref{sec:properties}). This is related to the work of \citet{lai2020policygnn}, where the goal is to update each node using a different number of layers (over a fixed topology). This is also related to the work of \cite{dai2022towards}, where the idea is to apply message passing on a learned topology but one that is the same at every layer. \cognns diverge from these studies in terms of the objectives and the approach.

\begin{figure*}[!ht]
    \begin{subfigure}{0.22\textwidth}
        \centering
	\begin{tikzpicture}[
    	vertex/.style = {draw,circle,inner sep=2pt, minimum height= 5pt},
            directed/.style = {thick, black!100, >=stealth, ->},
    	  bi/.style = {thick,black!100, stealth-stealth},
            undirected/.style = {thick,black!100, >=stealth,-}
           ]

    	\definecolor{blue}{rgb}{0.3,0.3, 0.8}
            \definecolor{green}{rgb}{0.3,0.8, 0.3}
            \definecolor{red}{rgb}{0.9,0.2, 0.2}
            \definecolor{gray3}{rgb}{0.9,0.9,.9}
    	\definecolor{gray2}{rgb}{0.3,0.3,0.3}

            \node[] (l) at (-0.4, 1.1) {$H$};
            
            % node name
            \node[] (s) at (-1.55, 0.65) {$s$};
            \node[] (r) at (-1.55, -0.6) {$r$};
            \node[] (u) at (-1.24, 0.3) {$u$};  
            \node[] (v) at (-0.62, 0.3) {$v$};
            \node[] (w) at (0, 0.3) {$w$};

            % nodes
    	\node[vertex, fill=red] (r) at (-1.8, -0.62) {};
    	\node[vertex, fill=gray] (s) at (-1.8, 0.62) {};
    	\node[vertex, fill=yellow] (u) at (-1.24, 0) {};
    	\node[vertex, fill=orange] (v) at (-0.62, 0) {};
    	\node[vertex, fill=black] (w) at (0, 0) {};
    	\node[vertex, fill=orange] (w_top_right) at (0.55, 0.46) {};
    	\node[vertex, fill=orange] (w_bot_right) at (0.55, -0.46) {};
            \node[vertex, fill=green] (rightmost_top) at (1.1, 0.54) {};
    	\node[vertex, fill=blue] (rightmost_bot) at (1.1, -0.54) {};  

            % Edges
    	\draw[undirected] (s) edge (r); 
    	\draw[undirected] (s) edge (u); 
    	\draw[undirected] (r) edge (u); 
    	\draw[undirected] (u) edge (v);
    	\draw[undirected] (v) edge (w); 
    	\draw[undirected] (w) edge (w_top_right);
    	\draw[undirected] (w) edge (w_bot_right); 
    	\draw[undirected] (w_top_right) edge (rightmost_top); 
    	\draw[undirected] (w_bot_right) edge (rightmost_bot); 
    	\draw[undirected] (rightmost_top) edge (rightmost_bot);
	\end{tikzpicture}
        \caption{Input graph $H$.}
    \end{subfigure}
    \hfill
        \begin{subfigure}{0.23\textwidth}
        \centering
	\begin{tikzpicture}[
    	vertex/.style = {draw,circle,inner sep=2pt, minimum height= 5pt},
            directed/.style = {thick, black!100, >=stealth, ->},
    	  bi/.style = {thick,black!100, stealth-stealth},
            undirected/.style = {thick,black!100, >=stealth,-}
           ]

    	\definecolor{blue}{rgb}{0.3,0.3, 0.8}
            \definecolor{green}{rgb}{0.3,0.8, 0.3}
            \definecolor{red}{rgb}{0.9,0.2, 0.2}
            \definecolor{gray3}{rgb}{0.9,0.9,.9}
    	\definecolor{gray2}{rgb}{0.3,0.3,0.3}

            \node[] (l) at (-0.4, 1.1) {$H^{(0)}$};
            
            % node name
            \node[] (s) at (-1.55, 0.65) {$s$};
            \node[] (r) at (-1.55, -0.6) {$r$};
            \node[] (u) at (-1.24, 0.3) {$u$};  
            \node[] (v) at (-0.62, 0.3) {$v$};
            \node[] (w) at (0, 0.3) {$w$};

            % nodes
    	\node[vertex, fill=red] (r) at (-1.8, -0.62) {};
    	\node[vertex, fill=gray] (s) at (-1.8, 0.62) {};
    	\node[vertex, fill=yellow] (u) at (-1.24, 0) {};
    	\node[vertex, fill=orange] (v) at (-0.62, 0) {};
    	\node[vertex, fill=black] (w) at (0, 0) {};
    	\node[vertex, fill=orange] (w_top_right) at (0.55, 0.46) {};
    	\node[vertex, fill=orange] (w_bot_right) at (0.55, -0.46) {};
            \node[vertex, fill=green] (rightmost_top) at (1.1, 0.54) {};
    	\node[vertex, fill=blue] (rightmost_bot) at (1.1, -0.54) {};  

            % Edges
    	\draw[bi] (s) edge (r); 
    	\draw[directed] (s) edge (u); 
    	\draw[directed] (r) edge (u); 
    	\draw[directed] (v) edge (u);
    	\draw[bi] (v) edge (w); 
    	\draw[bi] (w) edge (w_top_right);
    	\draw[bi] (w) edge (w_bot_right); 
    	\draw[bi] (w_top_right) edge (rightmost_top); 
    	\draw[bi] (w_bot_right) edge (rightmost_bot); 
    	\draw[bi] (rightmost_top) edge (rightmost_bot);
	\end{tikzpicture}
        \caption{Computation at $\ell=0$.}
    \end{subfigure}
    \hfill
    \begin{subfigure}{0.23\textwidth}
        \centering
	\begin{tikzpicture}[
    	vertex/.style = {draw,circle,inner sep=2pt, minimum height= 5pt, fill opacity=0.5},
            directed/.style = {thick, black!100, >=stealth, ->},
    	  bi/.style = {thick,black!100, stealth-stealth},
            undirected/.style = {thick,black!100, >=stealth,-}
           ]

    	\definecolor{blue}{rgb}{0.3,0.3, 0.8}
            \definecolor{green}{rgb}{0.3,0.8, 0.3}
            \definecolor{red}{rgb}{0.9,0.2, 0.2}
            \definecolor{gray3}{rgb}{0.9,0.9,.9}
    	\definecolor{gray2}{rgb}{0.3,0.3,0.3}

            \node[] (l) at (-0.4, 1.1) {$H^{(1)}$};
            
            % node name
            \node[] (s) at (-1.55, 0.65) {$s$};
            \node[] (r) at (-1.55, -0.6) {$r$};
            \node[] (u) at (-1.24, 0.3) {$u$};  
            \node[] (v) at (-0.62, 0.3) {$v$};
            \node[] (w) at (0, 0.3) {$w$};

            % nodes
    	\node[vertex, fill=red] (r) at (-1.8, -0.62) {};
    	\node[vertex, fill=gray] (s) at (-1.8, 0.62) {};
    	\node[vertex, fill=yellow] (u) at (-1.24, 0) {};
    	\node[vertex, fill=orange] (v) at (-0.62, 0) {};
    	\node[vertex, fill=black] (w) at (0, 0) {};
    	\node[vertex, fill=orange] (w_top_right) at (0.55, 0.46) {};
    	\node[vertex, fill=orange] (w_bot_right) at (0.55, -0.46) {};
            \node[vertex, fill=green] (rightmost_top) at (1.1, 0.54) {};
    	\node[vertex, fill=blue] (rightmost_bot) at (1.1, -0.54) {};  

            % Edges
    	\draw[directed] (v) edge (u);
    	\draw[bi] (v) edge (w); 
    	\draw[bi] (w) edge (w_top_right);
    	\draw[bi] (w) edge (w_bot_right); 
    	\draw[bi] (w_top_right) edge (rightmost_top); 
    	\draw[bi] (w_bot_right) edge (rightmost_bot); 
    	\draw[bi] (rightmost_top) edge (rightmost_bot);
	\end{tikzpicture}
        \caption{Computation at $\ell=1$.}
    \end{subfigure}
    \hfill
    \begin{subfigure}{0.23\textwidth}
        \centering
        \begin{tikzpicture}[
    	vertex/.style = {draw,circle,inner sep=2pt, minimum height= 5pt, fill opacity=0.2},
            directed/.style = {thick, black!100, >=stealth, ->},
    	  bi/.style = {thick,black!100, stealth-stealth},
            undirected/.style = {thick,black!100, >=stealth,-}
           ]

    	\definecolor{blue}{rgb}{0.3,0.3, 0.8}
            \definecolor{green}{rgb}{0.3,0.8, 0.3}
            \definecolor{red}{rgb}{0.9,0.2, 0.2}
            \definecolor{gray3}{rgb}{0.9,0.9,.9}
    	\definecolor{gray2}{rgb}{0.3,0.3,0.3}

            \node[] (l) at (-0.4, 1.1) {$H^{(2)}$};
            
            % node name
            \node[] (s) at (-1.55, 0.65) {$s$};
            \node[] (r) at (-1.55, -0.6) {$r$};
            \node[] (u) at (-1.24, 0.3) {$u$};  
            \node[] (v) at (-0.62, 0.3) {$v$};
            \node[] (w) at (0, 0.3) {$w$};

            % nodes
    	\node[vertex, fill=red] (r) at (-1.8, -0.62) {};
    	\node[vertex, fill=gray] (s) at (-1.8, 0.62) {};
    	\node[vertex, fill=yellow] (u) at (-1.24, 0) {};
    	\node[vertex, fill=orange] (v) at (-0.62, 0) {};
    	\node[vertex, fill=black] (w) at (0, 0) {};
    	\node[vertex, fill=orange] (w_top_right) at (0.55, 0.46) {};
    	\node[vertex, fill=orange] (w_bot_right) at (0.55, -0.46) {};
            \node[vertex, fill=green] (rightmost_top) at (1.1, 0.54) {};
    	\node[vertex, fill=blue] (rightmost_bot) at (1.1, -0.54) {};  

            % Edges
    	\draw[bi] (s) edge (r); 
    	\draw[bi] (s) edge (u); 
    	\draw[bi] (r) edge (u); 
    	\draw[directed] (u) edge (v);
    	\draw[directed] (w_top_right) edge (w);
    	\draw[directed] (w_bot_right) edge (w); 
    	\draw[bi] (w_top_right) edge (rightmost_top); 
    	\draw[bi] (w_bot_right) edge (rightmost_bot); 
    	\draw[bi] (rightmost_top) edge (rightmost_bot);
	\end{tikzpicture}
        \caption{Computation at $\ell=2$.}
    \end{subfigure}
    \hfill
    \caption{ The input graph $H$ and its computation graphs $H^{(0)}$, $H^{(1)}$, $H^{(2)}$ that are a result of applying the actions:  $\langle\listen,\listen,\standard\rangle$ for the node $u$; $\langle\standard,\standard,\listen\rangle$ for the nodes $v$ and $w$;  $\langle\standard,\isolate,\standard\rangle$ for the nodes $s$ and $r$;  $\langle\standard,\standard,\standard\rangle$ for all other nodes.}
    \label{fig:computational_graph}
\end{figure*}
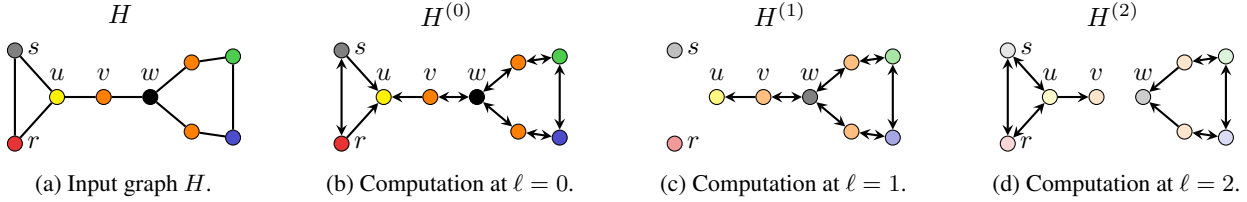

\section{Cooperative Graph Neural Networks}
\label{sec:coop}
$\cognns$ view each node in a graph as a \emph{player} of a multiplayer environment, where the state of each player is given in terms of the representation (or {\em state}) of its corresponding node. Every node is updated following a two-stage process. In the first stage, each node chooses an action from the set of actions given their current state and the states of their neighboring nodes. In the second stage, every node state gets updated based on their current state and the states of a \emph{subset} of the neighboring nodes, as determined by the actions in the first stage. As a result, every node can determine how to propagate information from or to its neighbors.

A $\cognn\left(\action, \env\right)$ architecture is given in terms of two cooperating GNNs: (i) an action network $\action$ for choosing the best actions, and (ii) an environment network $\env$ for updating the node representations. A $\cognn$ layer updates the representations $\vh_v^{(\ell)}$ of each node $v$ as follows. First, an action network $\action$ predicts, for each node $v$, a probability distribution $\vp_v^{\left(\ell \right)}\in\sR^4$ over the actions $\{\standard, \listen,\broadcast, \isolate\}$ that $v$ can take, given its state and the state of its neighbors $\mathcal{N}_v$:
\begin{align}
\label{label:action}
        \vp_v^{(\ell)}= \pi \left(\vh_v^{(\ell)}, \{\!\!\{\vh_u^{(\ell)}\mid u\in\mathcal{N}_v\}\!\!\} \right). 
\end{align}
Then, for each node $v$, an action is sampled $a_v^{\left(\ell \right)} \sim \vp_v^{\left(\ell \right)}$ using Straight-through GS, and an environment network $\env$ is utilized to update the state of each node in accordance with the sampled actions:
\begin{align}
\label{label:env}
        \vh_v^\steplplus = 
        \begin{cases}
            \eta^\stepl \big( \vh_v^\stepl, \{\!\!\{ \}\!\!\}\big), & a_v^{\left(\ell \right)} = \isolate \lor \broadcast   \\
            \eta^\stepl\big( \vh_v^\stepl, \mathcal{M} \big), & a_v^{\left(\ell \right)} = \listen \lor \standard
        \end{cases}
\end{align}
where $\mathcal{M}=\{\!\!\{  \vh_u^\stepl \mid u \in \mathcal{N}_v, a_u^{\left(\ell \right)} =\standard \lor \broadcast \}\!\!\}$.

This is a single layer update, and by stacking $L\geq 1$ layers, we obtain the representations $\vh_v^{\left(L\right)}$ for each node $v$.

In its full generality, a $\cognn\left(\action, \env\right)$ architecture can operate on (un)directed graphs and use any GNN architecture in place of the action network $\action$ and the environment network $\env$. To carefully assess the virtue of this new message-passing paradigm, we consider relatively simple architectures such as $\sumgnns$, $\meangnns$, GCN, GIN and GAT, which are respectively denoted as $\sum$, $\mu$, $*$, $\epsilon$, and $\alpha$. For example, we write $\comeansum$ to denote a \cognn architecture which uses $\sumgnn$ as its action network and $\meangnn$ as its environment network.

Fundamentally, \cognns update the node states in a fine-grained manner: if a node $v$ chooses to \isolatelong or to \broadcastlong then it gets updated only based on its previous state, which corresponds to a node-wise update function. On the other hand, if a node $v$ chooses the action \listenlong or \standardlong then it gets updated based on its previous state as well as the state of its neighbors which perform the actions \broadcastlong or \standardlong at this layer. 

\section{Model Properties}
\label{sec:properties}

We analyze \cognns, focusing on conceptual novelty, expressive power, and suitability to long-range tasks.

\subsection{Conceptual Properties}
\label{subsec:conceptual}
\textbf{Task-specific}: Standard message-passing updates nodes based on their neighbors, which is completely task-agnostic. By allowing each node to listen to the information from `relevant' neighbors only, \cognns can determine a computation graph which is best suited for the target task. For example, if the task requires information only from the neighbors with a certain degree then the action network can learn to listen only to these nodes (see~\Cref{subsec:synthetic}). 

\textbf{Directed}: The outcome of the actions that the nodes can take amounts to a special form of \emph{`directed rewiring'} of the input graph: an edge can be {\em dropped} (e.g., if two neighbors listen without broadcasting); an edge can remain {\em undirected} (e.g., if both neighbors apply the standard action); or, an edge can {\em become directed} implying directional information flow (e.g., if one neighbor listens while its neighbor broadcasts). 
Taking this perspective, the proposed message-passing can be seen as operating on a potentially different directed graph induced by the choice of actions at every layer (illustrated in \Cref{fig:computational_graph}). 
Formally, given a graph $G=(V,E)$, let us denote by $G^\stepl=(V,E^\stepl)$ the directed computational graphs induced by the actions chosen at layer $\ell$, where $E^\stepl$ is the set of directed edges at layer $\ell$. We can rewrite the update given in \Cref{label:env} concisely as follows:
\begin{equation*}
    \vh_v^\steplplus = 
        \env^\stepl\left(
            \vh_v^\stepl,
            \{\!\!\{\vh_u^\stepl\mid (u,v)\in E^\stepl\}\!\!\}
        \right).
\end{equation*}
Consider the input graph $H$ from \Cref{fig:computational_graph}: $u$ gets messages from $v$ only in the first two layers, and $v$ gets messages from $u$ only in the last layer, illustrating a directional message-passing between these nodes. This abstraction allows for a direct implementation of \cognns by simply considering the induced graph adjacency matrix at every layer.

\begin{wrapfigure}{r}{0.2\textwidth} 
\scalebox{0.85}{
    \begin{tikzpicture}[
        vertex/.style = {draw,circle,inner sep=2pt, minimum height= 5pt},
            black_edge/.style = {->, shorten >=1pt,shorten <=2pt, thick, >=stealth},
            gray_edge/.style = {->, shorten >=1pt,shorten <=2pt, thick, >=stealth, gray!90},
            red_edge/.style = {->, shorten >=1pt,shorten <=2pt, thick, >=stealth, line width=1.5pt, red}
           ]
    
        \definecolor{blue}{rgb}{0.3,0.3, 0.8}
        \definecolor{green}{rgb}{0.3,0.8, 0.3}
        \definecolor{red}{rgb}{0.9,0.2, 0.2}
        \definecolor{gray3}{rgb}{0.9,0.9,.9}
        \definecolor{gray2}{rgb}{0.3,0.3,0.3}  

        % node name
        \node[] (v) at (0.0, 0.25) {$v$};
        \node[] (u) at (0.8, 0.75) {$u$};
        \node[] (w) at (0.8, -0.25) {$w$};

        \node[] (s) at (1.6,1.25) {$s$};
        \node[] (r) at (1.6,0.75) {$r$};
        \node[] (v2) at (1.6,0.25) {$v$};

        \node[] (r2) at (2.7,1.5) {$r$};
        \node[] (u2) at (2.7,0.9) {$u$};
        \node[] (s2) at (2.7,0.3) {$s$};
        \node[] (w2) at (2.7,-0.9) {$w$};

        % Nodes
        \node[vertex, fill=orange] (v) at (0,0) {\quad};
        
        \node[vertex, fill=yellow] (u) at (0.8,0.5) {\quad};
        \node[vertex, fill=black] (w) at (0.8,-0.5) {\quad};

        \node[vertex, fill=gray] (s) at (1.6,1.0) {\quad};
        \node[vertex, fill=red] (r) at (1.6,0.5) {\quad};
        \node[vertex, fill=orange] (v2) at (1.6,0.0) {\quad};
        \node[vertex, fill=orange] (w_top_right) at (1.6,-0.5) {\quad};
        \node[vertex, fill=orange] (w_bot_right) at (1.6,-1.0) {\quad};

        \node[vertex, fill=red] (r2) at (2.4,1.5) {\quad};
        \node[vertex, fill=yellow] (u2) at (2.4,0.9) {\quad};
        \node[vertex, fill=gray] (s2) at (2.4,0.3) {\quad};
        \node[vertex, fill=black] (w2) at (2.4,-0.9) {\quad};
        \node[vertex, fill=green] (rightmost_top) at (2.4, -0.3) {};
        \node[vertex, fill=blue] (rightmost_bot) at (2.4, -1.5) {}; 

        % Edges
        \draw[red_edge] (u) to (v);
        \draw[gray_edge] (w) to (v);
        
        \draw[gray_edge] (r) to (u);
        \draw[gray_edge] (s) to (u);
        \draw[red_edge] (v2) to (u);
        \draw[black_edge] (v2) to (w);
        \draw[black_edge] (w_top_right) to (w);
        \draw[black_edge] (w_bot_right) to (w);

        \draw[black_edge] (r2) to (s);
        \draw[gray_edge] (u2) to (s);
        \draw[gray_edge] (u2) to (r);
        \draw[black_edge] (s2) to (r);
        \draw[gray_edge] (u2) to (v2);
        \draw[red_edge] (w2) to (v2);
        \draw[black_edge] (rightmost_top) to (w_top_right);
        \draw[black_edge] (w2) to (w_top_right);
        \draw[black_edge] (w2) to (w_bot_right);
        \draw[black_edge] (rightmost_bot) to (w_bot_right);

        \draw[dotted] (-0.4,2.0) -- (-0.4,-2.0);
        \draw[dotted] (0.4,2.0) -- (0.4,-2.0);
        \draw[dotted] (1.2,2.0) -- (1.2,-2.0);
        \draw[dotted] (2.0,2.0) -- (2.0,-2.0);

        \node[draw=none] (t_3) at (-0.4,-2.1) {\small $\ell=$};
        \node[draw=none] (t_3) at (0.0,-2.1) {\small $3$};
        \node[draw=none] (t_2) at (0.8,-2.1) {\small $2$};
        \node[draw=none] (t_1) at (1.6,-2.1) {\small $1$};
        \node[draw=none] (t_0) at (2.4,-2.1) {\small $0$};
    \end{tikzpicture}
    }
\end{wrapfigure}    
\textbf{Dynamic}: In \cognns, each node interacts with the `relevant' neighbors and does so only as long as they remain relevant. \cognns do not operate on a fixed computational graph, but rather on a learned computational graph, which is dynamic across layers. In our running example, the computational graph is a different one at every layer (depicted on the right hand side):  This is advantageous for the information flow (see~\Cref{subsec:long-range}). 

\textbf{Feature and Structure Based}: Standard message-passing is determined by the structure of the graph: two nodes with the same neighborhood get the same aggregated message. This is not necessarily the case in our setup, since the action network can learn different actions for two nodes with different node features, e.g., by choosing different actions for a \emph{red} node and a \emph{blue} node. This enables different messages for different nodes even if their neighborhoods are identical.

\textbf{Asynchronous}: Standard message-passing updates all nodes synchronously, which is not always optimal as argued by \citet{faber2022asynchronous}, especially when the task requires to treat the nodes non-uniformly.  By design, \cognns enable asynchronous updates across nodes.

\textbf{Conditional Aggregation}: The action network of \cognns can be viewed as a look-ahead function that makes decisions after applying $k$ layers. Specifically, at layer $\ell$, an action network of depth $k$ computes node representations on the original graph topology, which are $(k+\ell)$-layer representations. Based on these representations, the action network determines an action for each node, which induces a new graph topology for the environment network to operate on. In this sense, the aggregation of environment network at layer $\ell$ is determined by $(k+\ell)$-layer representations of the action network, which can be viewed as a ``look-ahead'' capability and the aggregation mechanism of the environment network is conditioned on this look-ahead capability. 

\textbf{Orthogonal to Attention}: The (soft) attention mechanism on graphs allows for aggregating --- based on learnable attention coefficients ---  a weighted mean of the features of neighboring nodes. While these architectures can weigh the contribution of different neighbors, they have certain limitations, e.g., weighted mean aggregation cannot count node degrees. Moreover, the conditional aggregation mechanism of \cognns goes beyond the capabilities of attention-based architectures.  The contribution of Co-GNNs is hence orthogonal to that of attention-based architectures, such as GATs, and these architectures can be used as base action/environment architectures in \cognns. In \Cref{subsec:synthetic}, we empirically validate this via a task that GAT cannot solve, but \cognns with a GAT environment network can.

\textbf{Mitigates Over-smoothing}: In principle, the action network of \cognns can choose the action $\isolatelong$ for a node if the features of the neighbours of this node are not informative. As a result, \cognns can mitigate over-smoothing. We validate this empirically in \Cref{app:oversmoothing}, but we also note that the optimisation becomes increasingly difficult once the number of layers gets too large.

\textbf{Efficient}: While being more sophisticated, our approach is efficient in terms of runtime, as we detail in \Cref{app:runtime}. \cognns are also parameter-efficient: they share the same action network across layers and as a result a comparable number of parameters to their baseline models.

\subsection{Expressive Power of \cognns}
\label{subsec:expresivity}
The environment and action networks of $\cognn$ architectures are parameterized by standard MPNNs. This raises an obvious question regarding the expressive power of $\cognn$ architectures: {\em are $\cognns$ also bounded by 1-WL in terms of distingushing graphs?} 

\begin{proposition}
\label{prop:expresivity}
  Let $G_1=(V_1,E_1,\mX_1)$ and $G_2=(V_2,E_2,\mX_2)$ be two non-isomorphic graphs. Then, for any threshold $0 < \delta < 1$, there exists a parametrization of a $\cognn$ architecture using sufficiently many layers $L$, satisfying $\mathbb{P}(\vz_{G_1}^{\left(L\right)} \neq \vz_{G_2}^{\left(L\right)}) \ge 1 - \delta$.
\end{proposition}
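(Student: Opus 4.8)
The plan is to exploit the single feature that separates \cognns from ordinary message passing, namely the \emph{stochasticity} of the action sampler, and to show that it lets a \cognn emulate a message-passing network with random node initialisation (RNI), which is known to be universal. Concretely, I would first fix the action network $\action$ to a trivial parametrisation whose output probability vector $\vp_v^{(\ell)}$ equals the uniform distribution over $\{\standard,\listen,\broadcast,\isolate\}$ at every node and every layer, independently of the input (this is achieved by zeroing the weights of the last layer of $\action$ and setting equal biases). Under the straight-through Gumbel--softmax sampler the forward pass then draws, for each node and each layer, an independent uniform action; this is the only source of randomness and it is, by construction, independent of the graph structure.

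The core step is to convert this discrete action randomness into genuine per-node random identifiers. I would devote an initial block of layers to this ``stamping'' phase: since each node independently listens (or not) at each layer, and a listening node whose neighbourhood contains a broadcasting node updates to a state different from that of a non-listening node, each layer deposits (up to a controllable failure probability) one random bit into a node's state. Designing $\env$ so that these bits are accumulated into dedicated coordinates while a protected copy of the structural embedding is carried along, after $m$ such layers every node carries a random signature in $\{0,1\}^m$ together with enough information to reconstruct its original neighbourhood. A union bound over the finitely many node pairs of $G_1$ and $G_2$ shows that for $m$ large enough all these signatures are pairwise distinct with probability at least $1-\delta/2$; on this event the node states are exactly those of an RNI-initialised graph.

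Conditioning on this good event, I would use the remaining layers with all nodes playing \standard, so that $\env$ reduces to an ordinary MPNN operating on the original graph now equipped with random node features. Since $G_1$ and $G_2$ are non-isomorphic and fixed, the universality of RNI message-passing networks~\citep{AbboudCGL21} supplies a parametrisation of these layers whose pooled output $\vz_G^{(L)}$ separates the two graphs with probability at least $1-\delta/2$. Combining the two phases by a union bound over their failure events yields $\mathbb{P}(\vz_{G_1}^{(L)}\neq \vz_{G_2}^{(L)})\geq 1-\delta$ for $L$ sufficiently large.

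The main obstacle is the stamping phase: unlike standard RNI, the randomness in a \cognn is discrete and \emph{entangled with the topology}, because the sampled actions decide which edges transmit rather than injecting independent continuous features. The delicate points are guaranteeing that each randomisation layer reliably imprints an (approximately) independent bit regardless of local degree, handling nodes whose neighbours all happen to stay silent, and doing so without destroying the structural information that the subsequent RNI-style computation needs. My intended remedy is to interleave randomisation layers with structure-preserving \standard layers and to keep the random signature and the structural embedding in separate coordinate blocks, so that the coupling between randomness and topology can be decoupled at the cost of only a constant factor more layers.
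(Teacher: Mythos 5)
Your core insight matches the paper's: the sampling of actions breaks the symmetry between 1-WL-indistinguishable nodes, yielding (with high probability) pairwise-distinct node representations that act as unique identifiers, after which the two graphs can be told apart. Your ``stamping'' phase is a parallelized variant of the paper's key lemma, which individualizes nodes sequentially: for each pair of non-isolated nodes $u,v$, the event that $u$ listens, $v$ isolates, and some neighbor of $u$ broadcasts has positive probability under the softmax, and an injective environment layer then separates their states; a union bound over layers and pairs gives the $1-\delta$ guarantee. Up to that point your argument is sound in spirit and would likely go through with the same bookkeeping.

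The gap is in your final phase. First, the universality theorem for RNI networks assumes node features drawn i.i.d.\ from a fixed distribution \emph{independent of the graph}; the signatures your stamping phase produces are discrete, non-identically distributed (the chance a listening node actually receives a message depends on its degree), and correlated across adjacent nodes (they share the neighbors' broadcast decisions). So you cannot invoke that theorem off the shelf --- you would have to reprove universality for this topology-entangled distribution, which is exactly the ``main obstacle'' you flag but do not resolve; storing the bits in separate coordinates preserves structural information but does not fix the distributional mismatch. Second, you fix the action network to output the uniform distribution \emph{at every layer independently of the input}, yet your computation phase requires all nodes to play \standardlong deterministically; with a shared input-independent action network these two behaviors are mutually exclusive, so the ``ordinary MPNN on the original graph'' you want to run in the later layers never materializes. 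Both gaps close if you drop universality altogether: once all non-isolated nodes of the disjoint union $G_1\sqcup G_2$ carry pairwise-distinct representations, the multisets of final node features of $G_1$ and $G_2$ are disjoint and hence unequal, and an injective pooling function already yields $\vz_{G_1}^{(L)}\neq\vz_{G_2}^{(L)}$. That weaker endgame is precisely what the paper uses, and it needs nothing beyond the individualization you have already (essentially) established.
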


The explanation for this result is the following: $\cognn$ architectures learn, at every layer, and for each node $u$, a probability distribution over the actions. These learned distributions are identical for two isomorphic nodes. However, the process relies on \emph{sampling} actions from these distributions, and clearly, the samples from identical distributions can differ. This makes $\cognn$ models {\em invariant in expectation,} and the variance introduced by the sampling process helps to discriminate nodes that are 1-WL indistinguishable. Thus, for two nodes indistinguishable by 1-WL, there is a non-trivial probability of sampling a different action for the respective nodes, which in turn makes their direct neighborhood differ. This yields unique node identifiers (\citet{Loukas20}) with high probability and allows us to distinguish any pair of graphs assuming an injective graph pooling function \citep{xu18}.
This is analogous to GNNs with random node features~\citep{AbboudCGL21,SatoSDM2020}, which are more expressive than their classical counterparts. We validate the stated expressiveness gain in \Cref{app:cycles}. 

It is important to note that \cognns are \emph{not} designed for expressiveness, and our result relies merely on variations in the sampling process, which is unstable and should be noted as a limitation. Clearly, \cognns can also use more expressive architectures as base architectures. 

\subsection{Dynamic Message-passing for Long-range Tasks}
\label{subsec:long-range}
Long-range tasks necessitate to propagate information between distant nodes: \cognns are effective for such tasks since they can propagate only relevant task-specific information.
Suppose that we are interested in transmitting information from a source node to a distant target node: \cognns can efficiently filter irrelevant information by learning to focus on a path connecting these two nodes, hence maximizing the information flow to the target node. We can generalize this observation towards receiving information from multiple distant nodes and prove the following:

\begin{proposition}\label{prop:long-range}
Let $G=(V,E,\mX)$ be a connected graph with node features. For some $k>0$, for any target node $v \in V$, for any $k$ source nodes $u_1, \ldots, u_k \in V$, and for any compact, differentiable function $f: \mathbb{R}^{d^{(0)}} \times \ldots \times \mathbb{R}^{d^{(0)}} \to \mathbb{R}^d$, there exists an $L$-layer $\cognn$ computing final node representations such that for any $\epsilon, \delta >0$ it holds that $\mathbb{P} (|\vh_v^{(L)} - f(\vx_{u_1}, \ldots \vx_{u_k})| < \epsilon) \geq 1-\delta$.
\end{proposition}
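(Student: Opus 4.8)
The plan is to realize, with high probability, a fixed ``routing'' of the source features up a spanning tree to the target, and then to reconstruct $f$ from the gathered features by a universal-approximation argument on the final layer. First I would use connectivity of $G$ to fix a tree $T$ rooted at $v$ that contains, for every source $u_i$, the unique $T$-path from $u_i$ to $v$; let $R$ be the depth of $T$ (bounded by the diameter, hence by $|V|$). I take $L = R + O(1)$ layers, where the first few auxiliary layers serve the symmetry-breaking role described below and the last realizes $f$.

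The core construction is a ``wave'' that moves information one hop toward the root per layer while isolating every node irrelevant at that step. Concretely, at layer $\ell$ the nodes at tree-depth $R-\ell$ take action \broadcast, their parents take action \listen, and all remaining nodes take action \isolate, so the active directed edges point strictly toward the root. I then choose the per-layer environment updates $\eta^{(\ell)}$ so that a listening node writes its received features into coordinates disjoint from those it will later broadcast while retaining already-accumulated data, making the transmission up the tree lossless. Using sum aggregation together with a GIN/Deep-Sets-style injective encoding of the received multiset, after $R$ layers the state of $v$ becomes an injective function $g(\vx_{u_1},\dots,\vx_{u_k})$ of exactly the source features, the fixed features of relay nodes being absorbed into the parametrization and all off-tree features having been isolated away. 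Since $g$ is injective and continuous on the compact feature domain, its inverse is continuous on $\mathrm{im}(g)$, so $f\circ g^{-1}$ is continuous on a compact set; by universal approximation the MLP realizing the update of the final layer approximates it within $\epsilon$. Thus, \emph{conditioned} on the intended actions being sampled, $\vh_v^{(L)}$ is within $\epsilon$ of $f(\vx_{u_1},\dots,\vx_{u_k})$ deterministically, which uses the compactness and differentiability (hence continuity) of $f$.

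The remaining — and main — obstacle is guaranteeing that the action network actually emits the intended, node- and layer-specific actions: on a graph with nontrivial automorphisms the action network sees identical states at symmetric nodes and cannot deterministically single out the tree. Here I would reuse the mechanism of \Cref{prop:expresivity}: the auxiliary initial layers use sampled actions to perturb the states and produce, with probability at least $1-\delta/2$, pairwise-distinct node features (unique identifiers). Conditioned on this event, and using that we fix the parametrization with full knowledge of $G$, $v$, $u_1,\dots,u_k$ and $T$, the action network can be given logits peaked enough that each of the at most $L|V|$ node/layer decisions yields the desired Gumbel-softmax sample with probability at least $1-\delta/(2L|V|)$, where conditioning on the layer index is achieved through a counter coordinate maintained by $\eta^{(\ell)}$ so that a shared action network still behaves layer-dependently. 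A union bound over the symmetry-breaking event and all routing decisions then realizes the entire intended computation with probability at least $1-\delta$, and on that event the $\epsilon$-bound above holds, giving the claim. The crux is thus the interplay between the probabilistic symmetry breaking (which secures correct routing) and the deterministic lossless transmission plus universal approximation (which secures the $\epsilon$-accuracy).
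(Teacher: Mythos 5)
Your proposal is correct and follows the same three-part skeleton as the paper's proof: break symmetry with the sampling mechanism of \Cref{prop:expresivity} so that actions can be made node-specific, route the source features to $v$ by prescribing \broadcastlong/\listenlong/\isolatelong patterns whose joint probability is controlled by a union bound, and finish with universal approximation of $f$ on the gathered features. The one genuine difference is the transmission scheme. The paper moves each $\vx_{u_i}$ \emph{sequentially} along its own shortest path (one hop per layer, one source at a time, $\sum_i r_i$ layers in total), writing it into a coordinate block of $v$'s representation reserved for source $i$; because each block is just copied along a single active edge, no injectivity of the aggregation is ever needed during transmission, and the decoder simply reads off $[\vx_{u_1}\oplus\cdots\oplus\vx_{u_k}]$. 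Your parallel ``wave'' up a spanning tree is more layer-efficient ($R$ layers rather than $\sum_i r_i$), but it pays for this by requiring an injective, continuous multiset encoding at every merge point and an inversion $g^{-1}$ at the root; injective sum-decompositions of multisets over \emph{continuous} (uncountable) domains are a known delicate point and need width growing with the maximum in-degree, whereas your continuity-of-$g^{-1}$-on-a-compact-image argument is fine once such a $g$ is granted. Your explicit treatment of two points the paper glosses over --- driving each of the $L|V|$ Gumbel-softmax decisions to succeed with probability $1-\delta/(2L|V|)$ by peaking the logits, and making a shared action network layer-dependent via a counter coordinate --- is if anything cleaner than the paper's bookkeeping with $\beta$ and $\delta_2$. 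Both routes also share the same quiet reliance on preserving features through the symmetry-breaking phase by padding the width; you and the paper handle that identically.
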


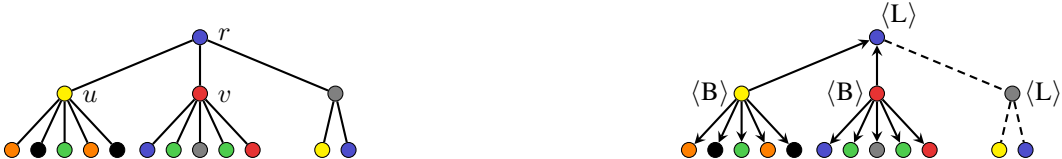
\begin{figure*}[ht!]
    \centering
	\begin{tikzpicture}[
            level distance = 0.75cm,
            level 1/.style={sibling distance=1.8cm},
            level 2/.style={sibling distance=0.35cm},
            every node/.style = {draw,circle,inner sep=2pt, minimum height= 5pt},
            noedge/.style = {draw=none},
            edge from parent/.style = {draw=none},
            ] 
            \definecolor{blue}{rgb}{0.3,0.3, 0.8}
            \definecolor{green}{rgb}{0.3,0.8, 0.3}
            \definecolor{red}{rgb}{0.9,0.2, 0.2}
            \definecolor{gray3}{rgb}{0.9,0.9,.9}
            \definecolor{gray2}{rgb}{0.3,0.3,0.3}
            \begin{scope}
                \node [fill=blue] (r){\quad}
                child { node [fill=yellow](r1) {\quad}
                child { node [fill=orange]   (r11) {\quad} }
                child { node [fill=black] (r12) {\quad} }
                child { node [fill=green] (r13) {\quad} }
                child { node [fill=orange] (r14) {\quad} }
                child { node [fill=black] (r15) {\quad} }
                }
                child  { node [fill=red](r2) {\quad}
                child { node [fill=blue]  (r21) {\quad} }
                child { node [fill=green] (r22) {\quad} }
                child { node [fill=gray]   (r23) {\quad} }
                child { node [fill=green] (r24) {\quad} }
                child { node [fill=red]   (r25) {\quad} }
                }
                child { node [fill=gray](r3) {\quad}
                child { node [fill=yellow] (r31) {\quad} }
                child { node [fill=blue] (r32) {\quad} }
                };
    
                \draw[thick,>=stealth] (r1) to (r);
                \draw[thick,>=stealth] (r2) to (r);
                \draw[thick,>=stealth] (r3) to (r);
    
                \draw[thick,>=stealth] (r1) edge (r11);
                \draw[thick,>=stealth] (r1) edge (r12);
                \draw[thick,>=stealth] (r1) edge (r13);
                \draw[thick,>=stealth] (r1) edge (r14);
                \draw[thick,>=stealth] (r1) edge (r15);
                \draw[thick,>=stealth] (r2) edge (r21);
                \draw[thick,>=stealth] (r2) edge (r22);
                \draw[thick,>=stealth] (r2) edge (r23);
                \draw[thick,>=stealth] (r2) edge (r24);
                \draw[thick,>=stealth] (r2) edge (r25);
                \draw[thick,>=stealth] (r3) edge (r31);
                \draw[thick,>=stealth] (r3) edge (r32);
    
                \node[right, xshift = 0.3em, yshift = +0.1em, draw=none] at (r) {$r$};
                \node[right, xshift = 0.3em, yshift = -0.1em, draw=none] at (r1) {$u$};
                \node[right, xshift = 0.3em, yshift = -0.1em, draw=none] at (r2) {$v$};
            \end{scope}
            \begin{scope}[xshift=9cm]
                \node [fill=blue] (r){\quad}
                child { node [fill=yellow](r1) {\quad}
                child { node [fill=orange]   (r11) {\quad} }
                child { node [fill=black] (r12) {\quad} }
                child { node [fill=green] (r13) {\quad} }
                child { node [fill=orange] (r14) {\quad} }
                child { node [fill=black] (r15) {\quad} }
                }
                child  { node [fill=red](r2) {\quad}
                child { node [fill=blue]  (r21) {\quad} }
                child { node [fill=green] (r22) {\quad} }
                child { node [fill=gray]   (r23) {\quad} }
                child { node [fill=green] (r24) {\quad} }
                child { node [fill=red]   (r25) {\quad} }
                }
                child { node [fill=gray](r3) {\quad}
                child { node [fill=yellow] (r31) {\quad} }
                child { node [fill=blue] (r32) {\quad} }
                };
    
                \draw[->, thick,>=stealth] (r1) to (r);
                \draw[->, thick,>=stealth] (r2) to (r);
                
                \draw[thick, densely dashed] (r) edge (r3);
    
                \draw[->, thick,>=stealth] (r1) edge (r11);
                \draw[->, thick,>=stealth] (r1) edge (r12);
                \draw[->, thick,>=stealth] (r1) edge (r13);
                \draw[->, thick,>=stealth] (r1) edge (r14);
                \draw[->, thick,>=stealth] (r1) edge (r15);
                \draw[->, thick,>=stealth] (r2) edge (r21);
                \draw[->, thick,>=stealth] (r2) edge (r22);
                \draw[->, thick,>=stealth] (r2) edge (r23);
                \draw[->, thick,>=stealth] (r2) edge (r24);
                \draw[->, thick,>=stealth] (r2) edge (r25);
                \draw[thick, densely dashed] (r3) edge (r31);
                \draw[thick, densely dashed] (r3) edge (r32);
                
                \node[above right,draw=none] at (r) {$\langle\listen\rangle$};
                \node[left,draw=none] at (r1) {$\langle\broadcast\rangle$};
                \node[left,draw=none] at (r2) {$\langle\broadcast\rangle$};
                \node[right,draw=none] at (r3) {$\langle\listen\rangle$}; 
            \end{scope}
        \end{tikzpicture}
    \caption{\rootn examples. \textbf{Left:} Example tree for \rootn.
   \textbf{Right:} Example of an optimal directed subgraph over the input tree, where the nodes with a degree of 6 ($u$ and $v$) $\broadcastlong$, while other nodes $\listenlong$.
    }
    \label{fig:combined_synthetic}
\end{figure*}

This means that if a property of a node $v$ is a function of $k$ distant nodes then \cognns can approximate this function. This follows from two findings: (i) the features of $k$ nodes can be transmitted to the source node without loss of information and (ii) the final layer of a \cognn architecture, e.g., an MLP, can approximate any differentiable function over $k$ node features~\citep{Hornik91,Cybenko89}. We validate these findings empirically on long-range interactions datasets \citep{dwivedi2023long} in \Cref{app:long-range-exp}.

\section{Experimental Results}
\label{sec:exp}
We evaluate $\cognns$ on a synthetic experiment,  and on real-world node classification datasets  \citep{platonov2023critical}. We also report a synthetic expressiveness experiment, an experiment on long-range interactions datasets \citep{dwivedi2023long}, and graph classification datasets \citep{Morris2020} in \Cref{app:additional-experiments}. Our codebase is available at \url{https://github.com/benfinkelshtein/CoGNN}.
\subsection{Synthetic Experiment on \rootn}
\label{subsec:synthetic}

\textbf{Task.} In this experiment, we compare \cognns to MPNNs on a new dataset: \rootn. We consider the following regression task: \emph{given a rooted tree, predict the average of the features of root-neighbors of degree $6$.} This task requires to first identify the neighbors of the root node with degree $6$ and then to return the average feature of these nodes. \rootn consists of trees of depth $2$ with random features of dimension 5. The generation (\Cref{app:synthetic_generation}) ensures each tree root has at least one degree-$6$ neighbor. An example is shown on the left of \Cref{fig:combined_synthetic}: the root node $r$ has only two neighbors with degree $6$ ($u$ and $v$) and the target prediction value is $(\vx_u+\vx_v)/2$.

\textbf{Setup.} We consider GCN, GAT, $\sumgnn$, $\meangnn$, as baselines,  and compare to $\cosum$, $\comean$, $\cogatsum$ and $\comeansum$. We report the Mean Average Error (MAE), use the Adam optimizer and present all details including the hyperparameters in \Cref{app:hyperparameters}.

\textbf{Results for MPNNs.} The results are presented in \Cref{tab:synthetic}, which includes the random baseline (i.e., MAE obtained via a random prediction). All MPNNs perform poorly: GCN, GAT, and $\meangnn$ fail to identify node degrees, making it impossible to detect nodes with a specific degree, which is crucial for the task. GCN and GAT are only marginally better than the random baseline, whereas $\meangnn$ performs substantially better than the random baseline. The latter can be explained by the fact that \meangnn employs a different transformation on the source node rather than treating it as a neighbor (unlike the self-loop in GCN/GAT). $\sumgnn$ uses sum aggregation and can identify the node degrees, but struggles in averaging the node features, which yields comparable MAE results to that of \meangnn.

\textbf{Results for \cognns.} The ideal mode of operation for \cognns would be as follows: 
\begin{enumerate}[noitemsep,topsep=0pt,parsep=1pt,partopsep=0pt,leftmargin=*]
    \item The action network chooses either \listenlong or \standardlong for the root node, and \broadcastlong  or \standardlong for the root-neighbors which have a degree $6$.
    \item The action network chooses either \listenlong or \isolatelong for all the remaining root-neighbors.
    \item The environment network updates the root node by averaging features from its broadcasting neighbors.
\end{enumerate}

\begin{table}[t]
\caption{Results on \rootn. Top three models are
colored by \red{First}, \blue{Second}, \gray{Third}.}
\label{tab:synthetic}
  \centering
  \begin{tabular}{lc}
    \toprule
    Model & MAE\\
    \midrule
    Random             & 0.474\\
    GAT                & 0.442\\
    $\sumgnn$          & 0.370\\
    $\meangnn$         & 0.329\\
    \midrule
    $\cosum$           & \gray{0.196}\\
    $\comean$          & 0.339\\
    $\cogatsum$       & \blue{0.085}\\
    $\comeansum$       & \red{0.079}\\
    \bottomrule
  \end{tabular}
 % \vspace{-2em}
\end{table}

\begin{table*}[h]
\caption{Results on node classification. Top three models are colored by \red{First}, \blue{Second}, \gray{Third}.}
  \centering
  \begin{tabular}{l@{\hspace{2pt}}ccccc}
    \toprule
    & roman-empire & amazon-ratings & minesweeper & tolokers & questions\\
    \midrule
    GCN         & 73.69 \stdfont{$\pm$ 0.74}        & 48.70 \stdfont{$\pm$ 0.63}        & 89.75 \stdfont{$\pm$ 0.52}        & 83.64 \stdfont{$\pm$ 0.67}        & 76.09 \stdfont{$\pm$ 1.27}        \\
    SAGE        & 85.74 \stdfont{$\pm$ 0.67}        & \blue{53.63} \stdfont{$\pm$ 0.39} & 93.51 \stdfont{$\pm$ 0.57}        & 82.43 \stdfont{$\pm$} 0.44        & 76.44 \stdfont{$\pm$ 0.62}        \\
    GAT         & 80.87 \stdfont{$\pm$ 0.30}        & 49.09 \stdfont{$\pm$ 0.63}        & 92.01 \stdfont{$\pm$ 0.68}        & \gray{83.70} \stdfont{$\pm$ 0.47}        & 77.43 \stdfont{$\pm$ 1.20}\\
    GAT-sep     & \gray{88.75} \stdfont{$\pm$ 0.41}        & {52.70} \stdfont{$\pm$ 0.62}        & \gray{93.91} \stdfont{$\pm$ 0.35} & \blue{83.78} \stdfont{$\pm$ 0.43}        & 76.79 \stdfont{$\pm$ 0.71}        \\
    GT          & 86.51 \stdfont{$\pm$ 0.73}        & 51.17 \stdfont{$\pm$ 0.66}        & 91.85 \stdfont{$\pm$ 0.76}        & 83.23 \stdfont{$\pm$ 0.64}        & 77.95 \stdfont{$\pm$ 0.68} \\
    GT-sep      & {87.32} \stdfont{$\pm$ 0.39} & 52.18 \stdfont{$\pm$ 0.80}        & 92.29 \stdfont{$\pm$ 0.47}        & 82.52 \stdfont{$\pm$ 0.92}        & \gray{78.05} \stdfont{$\pm$ 0.93} \\
    \midrule
   $\cosum$   & \red{91.57} \stdfont{$\pm$ 0.32} & 51.28 \stdfont{$\pm$ 0.56} & \blue{95.09} \stdfont{$\pm$ 1.18} & 83.36 \stdfont{$\pm$ 0.89} & \red{80.02} \stdfont{$\pm$ 0.86} \\
    $\comean$  & \blue{91.37} \stdfont{$\pm$ 0.35} & 
     \red{54.17} \stdfont{$\pm$ 0.37} 
    & \red{97.31} \stdfont{$\pm$ 0.41} & \red{84.45} \stdfont{$\pm$ 1.17} & 76.54 \stdfont{$\pm$ 0.95} \\
    \bottomrule
  \end{tabular}
  \label{tab:node-classification}
\end{table*}
 
$\comeansum$: The best result is achieved by this model, because \sumgnn as the action network can accomplish (1) and (2), and \meangnn as the environment network can accomplish (3). Therefore, this model leverages the strengths of $\sumgnn$ and $\meangnn$ to cater to the different roles of the action and environment networks, making it the most natural $\cognn$ model for the regression task.

$\cogatsum$: We observe a very similar phenomenon here to that of $\comeansum$. The action network allows GAT to determine the right topology, and GAT only needs to learn to average the features. This shows the contribution of \cognns is orthogonal to that of attention aggregation.

$\cosum$: This model also performs well, primarily because it uses \sumgnn as the action network, accomplishing (1) and (2). However, it uses another \sumgnn as the environment network which cannot easily mimic the averaging of the neighbor's features.

$\comean$: This model clearly performs weakly, since \meangnn as an action network cannot achieve (1) hindering the performance of the whole task. Indeed, $\comean$ performs comparably to $\meangnn$ suggesting that the action network is not useful in this case.

To shed light on the performance of $\cognn$ models, we computed the percentage of edges which are accurately retained or removed by the action network in a single layer $\cognn$ model. We observe an accuracy of 99.71\% for $\comeansum$, 99.55\% for $\cosum$, and 57.20\% for $\comean$. This empirically confirms the expected behavior of \cognns. In fact, the example tree is shown on the right of \Cref{fig:combined_synthetic} is taken from the experiment with $\comeansum$: reassuringly, this model learns precisely the actions that induce the shown optimal subgraph.

\subsection{Node Classification with Heterophilic Graphs}
\label{subsec:node-classification}

One of the strengths of $\cognns$ is their capability to utilize task-specific information propagation, which raises an obvious question: \emph{could \cognns outperform the baselines on heterophilious graphs, where standard message passing is known to suffer?} To answer this question, we assess the performance of $\cognns$ on heterophilic node classification datasets from \citep{platonov2023critical}. 

\textbf{Setup.} We evaluate $\sumgnn$, $\meangnn$ and their $\cognn$ counterparts, $\cosum$ and $\comean$ on the 5 heterophilic graphs, following the 10 data splits and the methodology of \citet{platonov2023critical}. We report the accuracy and standard deviation for roman-empire and amazon-ratings. We also report the ROC AUC and standard deviation for minesweeper, tolokers, and questions. The classical baselines GCN, GraphSAGE, GAT, GAT-sep, GT \citep{shi2021masked} and GT-sep are taken from \citet{platonov2023critical}. We use the Adam optimizer and report all hyperparameters in \Cref{app:hyperparameters}.

\textbf{Results.} All results are reported in \Cref{tab:node-classification}. Observe that \cognns achieve state-of-the-art results across the board, despite using relatively simple architectures as their action and environment networks. Importantly, $\cognns$ demonstrate an average accuracy improvement of 2.23\% compared to all baseline methods, across all datasets, surpassing the performance of more complex models such as GT.
In our main finding  we observe a consistent trend: enhancing standard models with action networks of \cognns results in improvements in performance. For example, we report $3.19\%$ improvement in accuracy on the roman-empire and $3.62\%$ improvement in ROC AUC on minesweeper compared to the best performing baseline. This shows that $\cognns$ are flexible and effective on different datasets and tasks. These results are reassuring as they establish $\cognns$ as a strong method in the heterophilic setting due to its unique ability to manipulate information flow.

\section{Empirical Insights for the Actions}
The action network of $\cognns$ is the key model component. The purpose of this section is to provide additional insights regarding the actions being learned by \cognns.

\subsection{Actions on Heterophilic vs Homophilic Graphs}
\label{sec:hh}
We aim to compare the actions learned on a homophilic task to the actions learned on a heterophilic task. One idea would be to inspect the learned action distributions, but they alone may not provide a clear picture of the graph's topology. For example, two connected nodes that choose to $\isolatelong$ achieve the same topology as nodes that choose both to $\broadcastlong$ or $\listenlong$. This is a result of the immense number of action configurations and their interactions.

To better understand the learned graph topology, we inspect the \emph{induced directed graphs} at every layer. Specifically, we present the \emph{ratio of the directed edges that are kept} across the different layers in \cref{fig:edge_ratio}. We record the directed edge ratio over the 10 different layers of our best, fully trained 10 $\comean$ models on the roman-empire \citep{platonov2023critical} and cora datasets \citep{pei2020geomgcn}. We follow the 10 data splits and the methodology of \citet{platonov2023critical} and \citet{Yang16}, respectively.

\begin{figure}[t!]
    \centering
        \centering
        \includegraphics[width=0.8\linewidth]{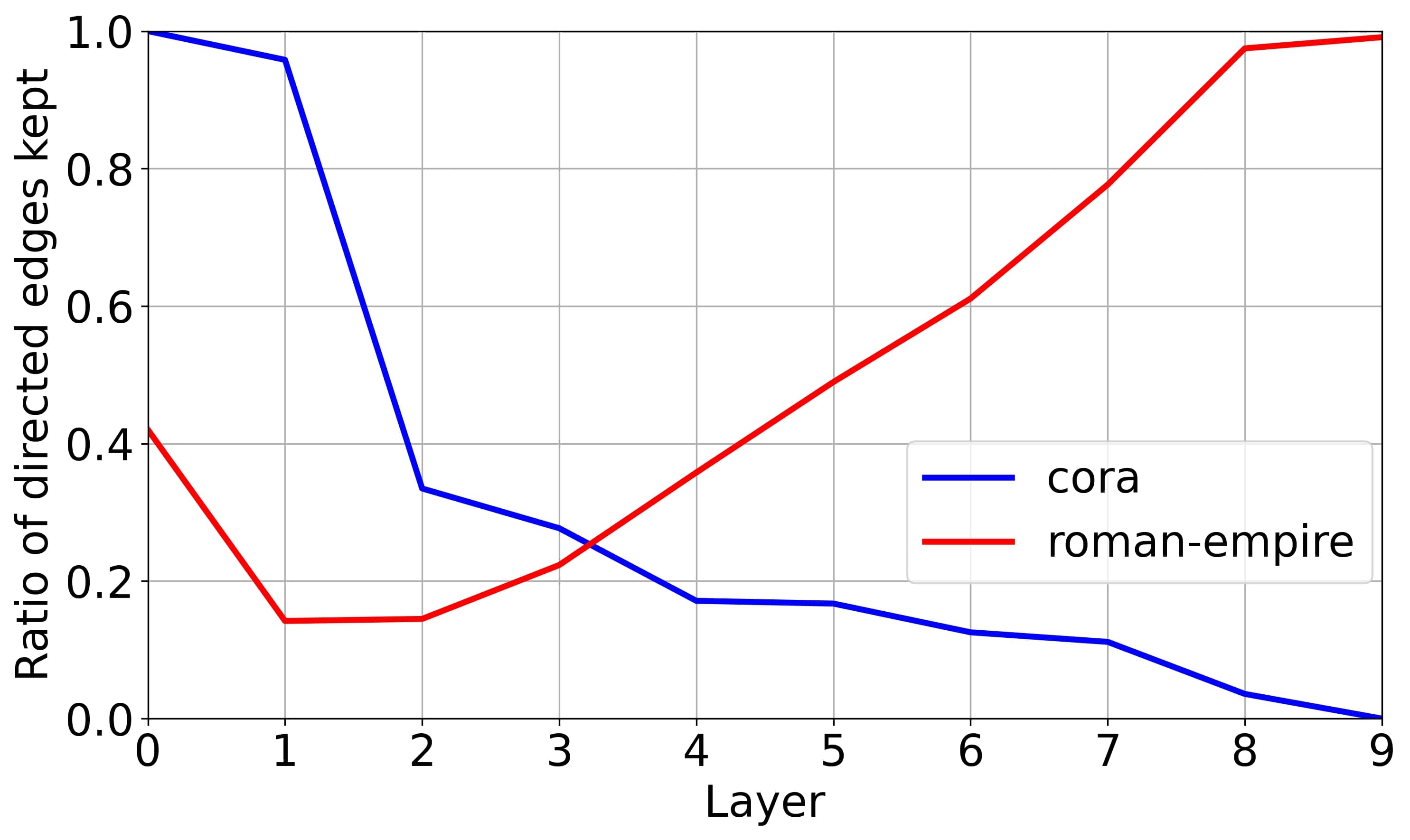}
    \caption{The ratio of directed edges that are kept on cora (as a homophilic dataset) and on roman-empire (as a heterophilic dataset) for each layer  $0 \leq \ell <10$.}
    \label{fig:edge_ratio}
\end{figure}

This experiment serves as a strong evidence for the adaptive nature of \cognns this statement. Indeed, by inspecting \Cref{fig:edge_ratio}, we observe completely \emph{opposite trends} between the two datasets.

On the homophilic dataset cora, the ratio of edges that are kept gradually \emph{decreases} as we go to the deeper layers. In fact, $100\%$ of the edges are kept at layer $\ell=0$ while \emph{all} edges are dropped at layer $\ell=9$. This is very insightful because homophilic datasets are known to \emph{not} benefit from using many layers, and the trained \cognn model recognizes this by eventually isolating all the nodes. This is particularly the case for cora, where classical MPNNs typically achieve their best performance with $1$-$3$ layers.

On the heterophilic dataset roman-empire, the ratio of edges that are kept gradually \emph{increases} after $\ell=1$ as we go to the deeper layers. Initially, $\sim 42 \%$ of the edges are kept at layer $\ell=0$ while eventually this reaches $99\%$ at layer $\ell=9$. This is interesting, since in heterophilic graphs, edges tend to connect nodes of different classes and so classical MPNNs, which aggregate information based on the homophily assumption perform poorly. Although, $\cognn$ model uses these models it compensates by controlling information flow. The model manages to capture the heterophilous aspect of the dataset by restricting the flow of information in the early layer and slowly enabling it the deeper the layer (the further away the nodes), which might be a great benefit to its success over heterophilic benchmarks. 

\subsection{What Actions are Performed on Minesweeper?}
\label{subsec:visualize}
To better understand the topology learned by $\cognns$, we visualize the topology at each layer in a $\cognn$ model over the highly regular minesweepers dataset. 

\textbf{Dataset.}
Minesweeper \citep{platonov2023critical} is a synthetic
dataset inspired by the Minesweeper game. It is a semi-supervised node classification dataset with a regular $100 \times 100$ grid where each node is connected to eight neighboring nodes. Each node has an one-hot-encoded input feature showing the number of adjacent mines. A randomly chosen 50\% of the nodes have an unknown feature, indicated by a separate binary feature. The task is to identify whether the querying node is a mine.

\textbf{Setup.}
We train a $10$-layer $\comean$ model and present the graph topology at layer $\ell=4$. The evolution of the graph topology from layer $\ell=1$ to layer $\ell=8$ is presented in \Cref{app:visu_actions}. We choose a node (black), and at every layer $\ell$, we depict its neighbors up to distance $10$.  In this visualization, nodes which are mines are shown in red, and other nodes in blue. The features of non-mine nodes (indicating the number of neighboring mines) are shown explicitly whereas the nodes whose features are hidden are labeled with a question mark.

\begin{figure}[t]
    \centering
        \centering
        \includegraphics[width=0.85\linewidth]{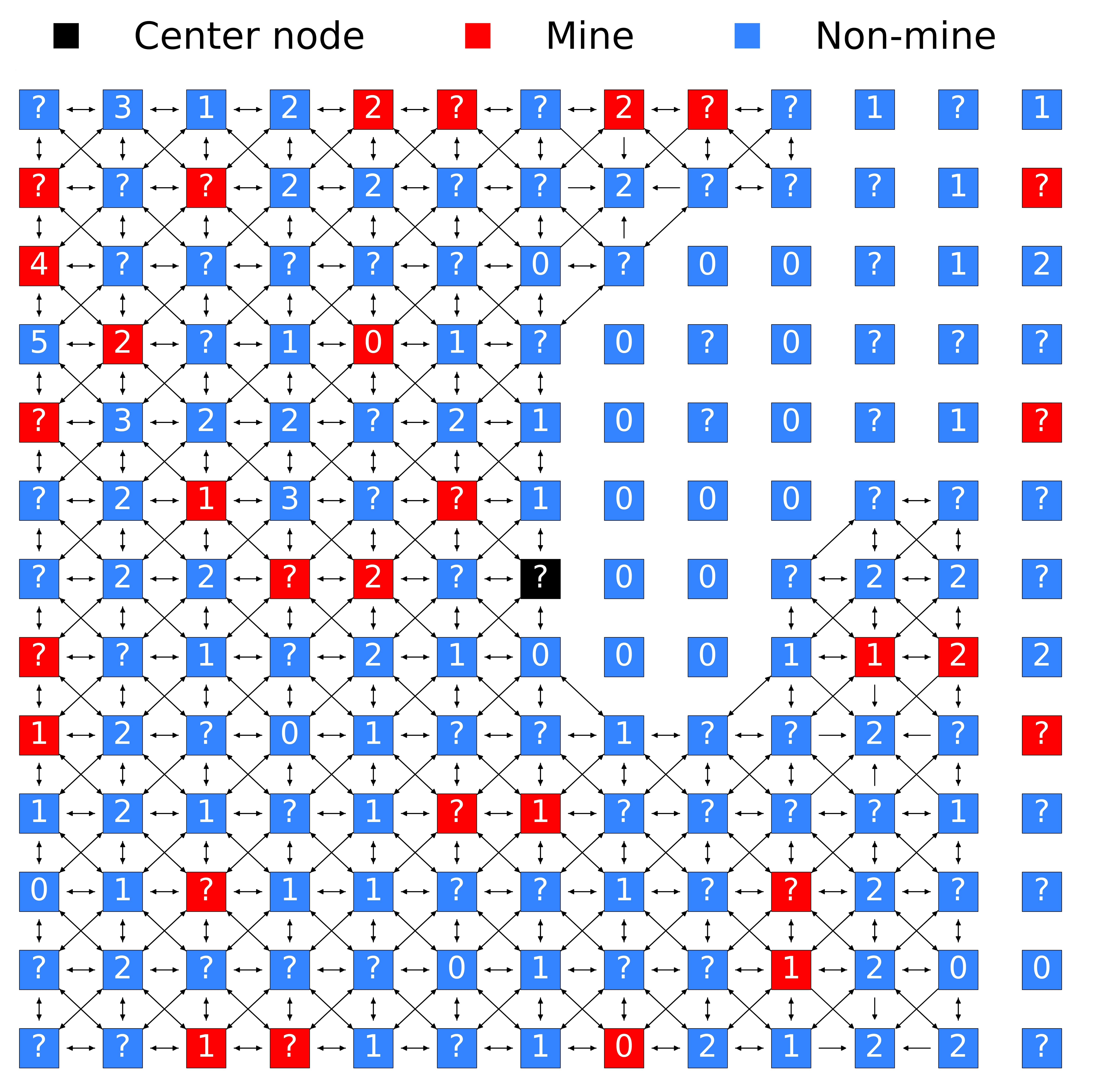}
    \caption{The 10-hop neighborhood at layer $\ell=4$}
    \label{fig:minesweeper}
\end{figure}

\begin{figure*}[ht!]
    \begin{subfigure}{0.5\textwidth}
        \centering
        \includegraphics[width=0.7\linewidth]{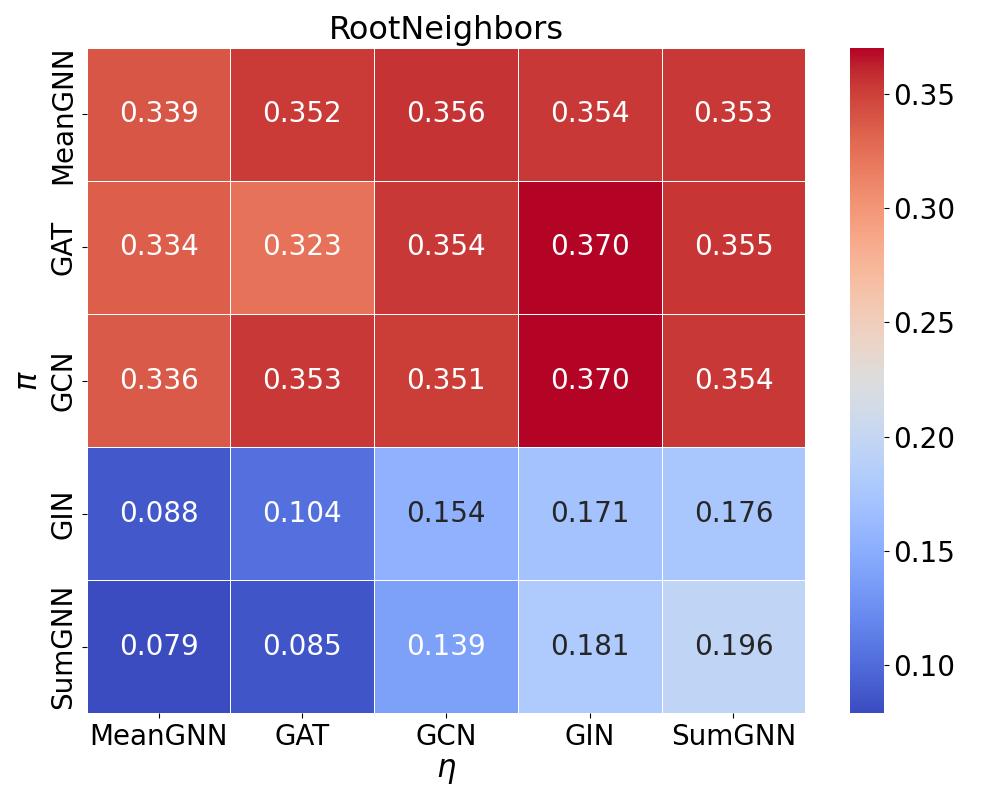}
    \end{subfigure}
        \hfill
    \begin{subfigure}{0.5\textwidth}
        \centering
        \includegraphics[width=0.7\linewidth]{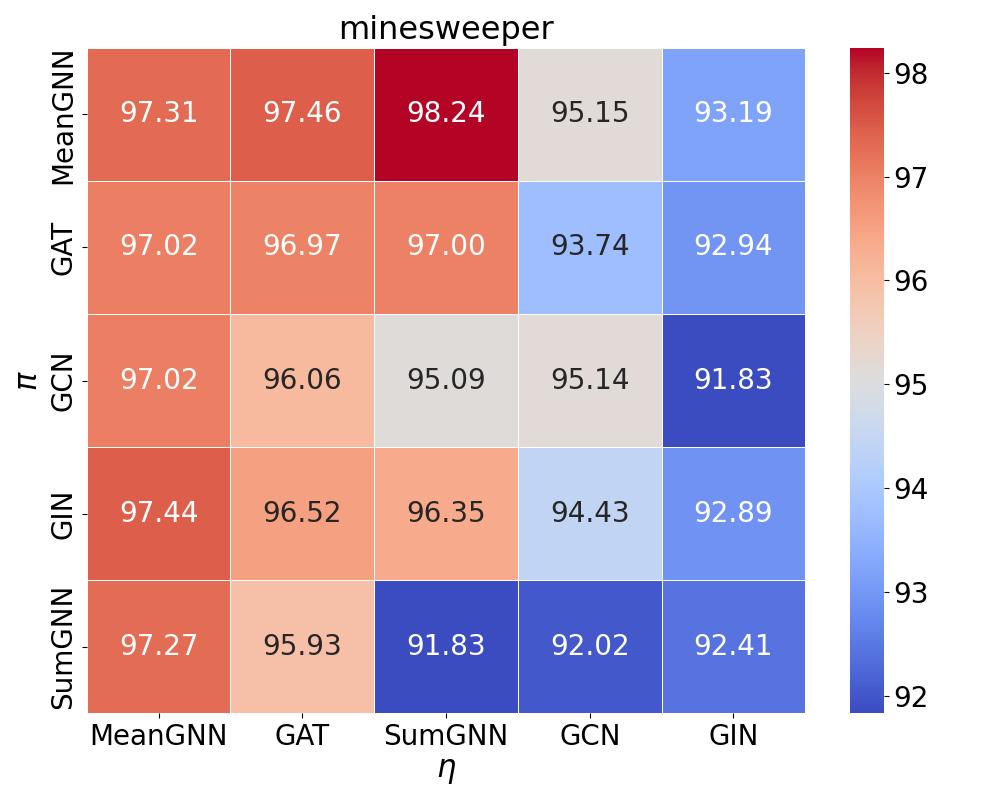}
    \end{subfigure}
    \caption{MAE and ROC AUC as a function of the choice of action ($\action$) and environment ($\env$) networks over \rootn (\textbf{left}) and the minesweeper (\textbf{right}) experiments, respectively.}
    \label{fig:heatmaps}
\end{figure*}

\textbf{Interpreting the Actions.} The visualization of the actions at layer $\ell=4$ is shown in \Cref{fig:minesweeper}. In this game, every label is informative: non-$0$-labeled nodes are more informative in the earlier layers ($\ell=1,2,3,4$), whereas $0$-labeled nodes become more informative at later layers. This can be explained by considering two cases:
\begin{enumerate}[noitemsep,topsep=0pt,parsep=1pt,partopsep=0pt,leftmargin=*]
    \item The target node has \emph{at least one} $0$-labeled neighbor: In this case, the prediction trivializes, since we know that the target node cannot be a mine. \label{item1}
    \item The target node has \emph{no} $0$-labeled neighbors: In this case, the model needs to make a sophisticated inference based on the surrounding mines within the $k$-hop distance. In this scenario, a node obtains more information by aggregating from non-$0$-label nodes. The model can still implicitly infer ``no mine'' from the lack of a signal/message. \label{item2}
\end{enumerate} 
The action network appears to diffuse the information of non-$0$-labeled nodes in the early layers to capture nodes from case (\ref{item2}), while largely isolating the $0$-labeled nodes until the last layers (avoiding mixing of signals and potential loss of information) which can later be used to capture the nodes from case (\ref{item1}).
In the earlier layers, the action network prioritizes the information flowing from the left sections of the grid in \Cref{fig:minesweeper} where more mines are present. After identifying the most crucial information and propagating this through the network, it then requires this information to also be communicated with the nodes that initially were labeled with $0$. This leads to an almost fully connected grid in the later layers (see $\ell=7,8$ in \Cref{fig:visualization7,fig:visualization8}).

\subsection{Which Action or Environment Network?}
\label{subsec:ablation}
We conduct an ablation study on the choice of action and environment networks to quantify its affect.

\textbf{Setup.} We experiment with all 25 combinations of the architectures \meangnns, GAT, \sumgnn, GCN and GIN on the heterophilic graph minesweeper and on the synthetic dataset \rootn. We report MAE for \rootn and the ROC AUC for minesweeper.

\textbf{\rootn Results.} The results reported in \cref{fig:heatmaps} (left) support our analysis from \cref{subsec:synthetic}: an environment network with mean-type aggregation (GCN, GAT, or \meangnn) and an action network with sum-type aggregation (\sumgnn, or GIN) are best choices for the task. The choice of the action network is critical for this task: \sumgnn and GIN yield best results across the board when they are used as action networks. In contrast, if we use GAT, GCN, or \meangnn as action networks then the results are poor and  comparable to baseline results on this task. These action networks cannot detect node cardinality which prevents them from choosing the optimal actions as elaborated in \cref{subsec:synthetic}. The choice of the environment network is relatively less important for this task.

\textbf{Minesweeper Results.} In \cref{fig:heatmaps} (right), \cognns achieve multiple state-of-the-art results on minesweeper with different choices of action and environment networks.

In terms of the environment network, we observe that MeanGNN and GAT yield consistently robust results when used as environment networks regardless of the choice of the action network. This makes sense in the context of the minesweeper game. In order to make a good inference, a $k$-layer environment network can keep track of the average number of mines found in each hop distance. The task is hence well-suited to mean-style aggregation environment networks, which manifests as empirical robustness. GCN performs worse as environment network, since it cannot distinguish a node from its neighbors.

In terms of the action network, we observed earlier that the role of the action network is to mainly make a distinction between $0$-labeled nodes and non-$0$-labeled nodes, and such an action network can be realized with all of the architectures considered.
As a result, we do not observe dramatic differences in the performance regarding to the choice of the action network in this task.

\section{Summary and Outlook}

We introduced $\cognn$ architectures which can dynamically explore the graph topology while learning. These architectures have desirable properties which can inform future work. Looking forward, one potential future direction is to adapt \cognns to other types of graphs such as directed, and even multi-relational graphs. One possible approach is by including actions that also consider the directionality. For example, for each node $u$, one can define the actions \textsc{Listen-Inc} (listen to nodes that have an incoming edge to $u$) and \textsc{Listen-Out} (listen to nodes that have an incoming edge from $u$) and extend the other actions analogously to incorporate directionality. It is also possible to consider other types of actions or extend our approach to edge-wise actions (rather than node-wise), though these extensions will lead to a larger state space. 

\section*{Acknowledgments}
The authors would like to thank the anonymous reviewers for their feedback which led to substantial improvements in the presentation of the paper. The first author is funded by the Clarendon scholarship. The authors would like to also acknowledge the use of the University of Oxford Advanced Research Computing (ARC) facility in carrying out this work (http://dx.doi.org/10.5281/zenodo.22558). This work was also partially funded by EPSRC Turing AI World-Leading Research Fellowship
No. EP/X040062/1.

\section*{Impact Statement}
This paper presents a novel graph neural network paradigm whose goal is to explore the graph topology while learning. There are many potential societal consequences of our work, none of which we feel must be specifically highlighted here.

\bibliography{icml2024}
\bibliographystyle{icml2024}

%%%%%%%%%%%%%%%%%%%%%%%%%%%%%%%%%%%%%%%%%%%%%%%%%%%%%%%%%%%%%%%%%%%%%%%%%%%%%%%
%%%%%%%%%%%%%%%%%%%%%%%%%%%%%%%%%%%%%%%%%%%%%%%%%%%%%%%%%%%%%%%%%%%%%%%%%%%%%%%
% APPENDIX
%%%%%%%%%%%%%%%%%%%%%%%%%%%%%%%%%%%%%%%%%%%%%%%%%%%%%%%%%%%%%%%%%%%%%%%%%%%%%%%
%%%%%%%%%%%%%%%%%%%%%%%%%%%%%%%%%%%%%%%%%%%%%%%%%%%%%%%%%%%%%%%%%%%%%%%%%%%%%%%
\newpage
\appendix
\onecolumn
\section{Proofs of Technical Results}
\label{app:proofs}

\subsection{Proof of \cref{prop:expresivity}}
In order to prove \Cref{prop:expresivity}, we first prove the following lemma, which shows that all non-isolated nodes of an input graph can be individualized by \cognns:

\begin{lemma}\label{lemma:node-expresivity}
    Let $G=(V,E,\mX)$ be a graph with node features. For every pair of non-isolated nodes $u,v\in V$ and for all $\delta > 0$, there exists a $\cognn$ architecture with sufficiently many layers $L$ which satisfies $\mathbb{P}(\vh_u^{\left(L\right)} \neq \vh_v^{\left(L\right)}) \ge 1 - \delta$.
\end{lemma}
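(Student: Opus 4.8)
The plan is to exploit the only source of asymmetry available to a \cognn, namely the independent sampling of actions, and to choose an architecture that (i) creates a difference between $\vh_u$ and $\vh_v$ with at least a constant probability at every layer and (ii) never destroys a difference once created. First I would fix the action network $\pi$ to output the uniform distribution over $\{\standard,\listen,\broadcast,\isolate\}$ for every node at every layer; this is realised by zeroing its weights so that all four logits coincide, and it makes the sampled actions $\{a_w^{(\ell)}\}_{w\in V}$ i.i.d.\ uniform, independent across nodes and across layers. For the environment network I would choose, \emph{layer by layer}, an update $\eta^{\stepl}$ that is injective on the (finite) set of reachable pairs $(\vh_w^{\stepl},\mathcal{M}_w^{\stepl})$; such an injective set-update is realisable by an MPNN over a finite domain in the spirit of \citet{xu18}, and in particular it separates the empty multiset from any non-empty one. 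If $\vx_u\neq\vx_v$ the two states already differ at layer $0$ and, by the persistence argument below, remain different forever, so I may assume $\vx_u=\vx_v$.

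\textbf{Persistence.} The key structural observation is that injectivity of $\eta^{\stepl}$ \emph{on pairs} propagates inequalities deterministically: if $\vh_u^{\stepl}\neq\vh_v^{\stepl}$, then the input pairs $(\vh_u^{\stepl},\mathcal{M}_u^{\stepl})$ and $(\vh_v^{\stepl},\mathcal{M}_v^{\stepl})$ differ already in their first coordinate, hence are distinct elements of the reachable set, so $\vh_u^{\steplplus}\neq\vh_v^{\steplplus}$ with certainty, irrespective of the sampled actions. Consequently the events $A_\ell=\{\vh_u^{\stepl}=\vh_v^{\stepl}\}$ are nested, $A_{\ell+1}\subseteq A_\ell$, and a difference can never be undone.

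\textbf{Creation.} It remains to lower-bound, uniformly in $\ell$, the probability of creating a difference at a layer where none exists. Conditioned on $A_\ell$ we have $\vh_u^{\stepl}=\vh_v^{\stepl}$, so by injectivity $\vh_u^{\steplplus}\neq\vh_v^{\steplplus}$ as soon as the received multisets differ, $\mathcal{M}_u^{\stepl}\neq\mathcal{M}_v^{\stepl}$. Since $u$ is non-isolated it has a neighbour $w$, and I would force the configuration in which $u$ chooses \listen, $w$ chooses \broadcast, and $v$ chooses \isolate: then $\mathcal{M}_u^{\stepl}$ is non-empty while $\mathcal{M}_v^{\stepl}=\emptyset$, so the multisets differ by cardinality and the injective $\eta^{\stepl}$ separates them. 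Whenever $u,v,w$ are distinct this configuration has probability $(1/4)^3$ under the uniform action network; the only degenerate situations (e.g.\ $u$ and $v$ adjacent with no other neighbours) are handled by an entirely analogous configuration with the roles of $u$ and $v$ swapped, so in every case $\mathbb{P}(A_{\ell+1}^{c}\mid A_\ell)\ge c$ with $c=(1/4)^3$. As the layer-$\ell$ actions are independent of the history determining $A_\ell$, I would chain these bounds to obtain $\mathbb{P}(\vh_u^{\left(L\right)}=\vh_v^{\left(L\right)})=\mathbb{P}(A_L)\le(1-c)^{L}$, and finally pick $L\ge \log\delta/\log(1-c)$ so that this is at most $\delta$.

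\textbf{Main obstacle.} The delicate point is not creating a difference — one sampled asymmetry suffices — but guaranteeing that symmetric re-merging cannot silently erase it at a later layer; this is exactly what the pair-injective environment update rules out, and it is what lets a constant per-layer success probability compound into the geometric bound $(1-c)^{L}$. A secondary nuisance is the small amount of casework needed to secure a constant-probability distinguishing configuration when $u$ and $v$ are adjacent or low-degree, which the non-isolation hypothesis is precisely what makes possible.
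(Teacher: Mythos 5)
Your proposal is correct and follows essentially the same route as the paper's proof: an injective environment network plus randomly sampled actions that, with constant per-layer probability, give $u$ a non-empty received multiset while $v$ receives the empty one, with injectivity guaranteeing the difference persists and a geometric chaining over $L$ layers yielding the $1-\delta$ bound. The only cosmetic differences are that you pin the action network to the uniform distribution (the paper keeps it general and only uses positivity of the softmax outputs, citing the uniform case as an example) and that you make the adjacency/low-degree casework explicit where the paper treats it implicitly.
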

\begin{proof}
    We consider an $L$-layer $\cognn(\env,\action)$ architecture satisfying the following:
    \begin{enumerate}[(i)]
        \item the \emph{environment} network $\env$ is composed of ${L}$ injective layers,
        \item the \emph{action} network $\action$ is composed of a \emph{single} layer, and it is shared across \cognn layers.
    \end{enumerate}
    Item (i) can be satisfied by a large class of GNN architectures, including \sumgnn \citep{MorrisAAAI19} and GIN \citep{xu18}. We start by assuming $\vh_u^{(0)} = \vh_v^{(0)}$. These representations can be differentiated if the model can jointly realize the following actions at some layer $\ell$ using the action network $\action$:
    \begin{enumerate}
        \item $a_u^{\left(\ell \right)} =\listen \lor \standard$,
        \item $a_v^{\left(\ell \right)} =\isolate \lor \broadcast$, and
        \item $\exists$ a neighbor $w$ of $u$ s.t.\ $a_w^{\left(\ell \right)} =\standard \lor \broadcast$.
    \end{enumerate}
    The key point is to ensure an update for the state of $u$ via an aggregated message from its neighbors (at least one), while isolating $v$. In what follows, we assume the worst-case for the degree of $u$ and consider a node $w$ to be the only neighbor of $u$. Let us denote the joint probability of realizing these actions 1-3 for the nodes $u,v,w$ at layer $\ell$ as:
    \[
    p^\stepl_{u,v} = \mathbb{P}\left( 
            ( a_u^{\left(\ell \right)} =\listen \lor \standard ) \land 
            (a_v^{\left(\ell \right)} =\isolate \lor \broadcast)  \land
            (a_w^{\left(\ell \right)} =\standard \lor \broadcast) \right).
    \]
    The probability of taking each action is non-zero (since it is a result of applying softmax) and $u$ has at least one neighbor (non-isolated), therefore $p^\stepl_{u,v} >0$. For example, if we assume a constant action network that outputs a uniform distribution over the possible actions (each action probability $0.25$) then $p^\stepl_{u,v} = 0.125$. 

    This means that the environment network $\env$ applies the following updates to the states of $u$ and $v$ with probability $p^\stepl_{u,v} >0$:
    \begin{align*}
          \vh_u^\steplplus &= 
            \env^\stepl\left(
                \vh_u^\stepl,
                 \{\!\!\{  \vh_w^\stepl \mid w \in \mathcal{N}_u,a_w^{\left(\ell \right)} =\standard \lor \broadcast \}\!\!\} \right), \\
        \vh_v^\steplplus &= 
            \env^\stepl\left(
                \vh_v^\stepl,\{\!\!\{ \}\!\!\} 
                \right).
    \end{align*} 
    The inputs to the environment network layer $\env^{(\ell)}$ for these updates are clearly different, and since the environment layer is injective, we conclude that $\vh_u^\steplplus \neq \vh_v^\steplplus$.
    
    Thus, the probability of having different final representations for the nodes $u$ and $v$ is lower bounded by the probability of the events 1-3 jointly occurring at least once in one of the \cognn layers, which, by applying the union bound, yields:
    \begin{align*}
        \mathbb{P}(\vh_u^{\left(L_{u,v}\right)} \neq \vh_v^{\left(L_{u,v}\right)}) \ge 1 - \prod^{L_{u,v}}_{\ell=0} \left(1 - p^\stepl_{u,v}\right) \ge 1 - \left(1 - \gamma_{u,v} \right)^{L_{u,v}} \ge 1 - \delta
    \end{align*}
    where $\gamma_{u,v} = \max_{\ell\in [L_{u,v}]}{\left(p^\stepl_{u,v}\right)}$ and $L_{u,v}=\log_{1 - \gamma_{u,v}}\left(\delta\right)$.
 
    We repeat this process for all pairs of non-isolated nodes $u,v\in V$. Due to the injectivity of $\env^{(\ell)}$ for all $\ell\in [L]$, once the nodes are distinguished, they cannot remain so in deeper layers of the architecture, which ensures that all nodes $u,v\in V$ differ in their final representations $\vh_u^\stepl\neq\vh_v^\stepl$ after this process completes. The number of layers required for this construction is then given by:
    \begin{equation*}
        L=  |V\setminus I| \log_{1 - \alpha}\left(\delta\right) \ge \sum_{u,v\in V\setminus I} \log_{1 - \gamma_{u,v}}\left(\delta\right),
    \end{equation*}
    where $I$ is the set of all isolated nodes in $V$ and
    \begin{equation*}
        \alpha= \max_{u,v\in V\setminus I}\left(\gamma_{u,v}\right)= \max_{u,v\in V\setminus I}\left(\max_{\ell\in [L_{u,v}]}{\left(p^\stepl_{u,v}\right)}\right).
    \end{equation*}
Having shown a \cognn construction with the number of layers bounded as above, we conclude the proof.   
\end{proof}

\begin{propositioncopy}{\ref{prop:expresivity}}
Let  $G_1=(V_1,E_1,\mX_1)$ and $G_2=(V_2,E_2,\mX_2)$ be  two non-isomorphic graphs. Then, for any threshold $0 < \delta < 1$, there exists a parametrization of a $\cognn$ architecture using sufficiently many layers $L$, satisfying $\mathbb{P}(\vz_{G_1}^{\left(L\right)} \neq \vz_{G_2}^{\left(L\right)}) \ge 1 - \delta$.
\end{propositioncopy}
\begin{proof}
    Let $\delta>0$ be any value and consider the graph $G=(V,E,\mX)$ which has $G_1$ and $G_2$ as its components:
    \begin{equation*}
        V= V_1\cup V_2, \quad E= E_1\cup E_2, \quad \mX = \mX_1 || \mX_2,
    \end{equation*}
    where $||$ is the matrix horizontal concatenation. By \cref{lemma:node-expresivity}, for every pair of non-isolated nodes $u,v\in V$ and for all $\delta > 0$, there exists a $\cognn$ architecture with sufficiently many layers $L=|V\setminus I| \log_{1 - \alpha}\left(\delta\right)$ which satisfies:
    \begin{equation*}
      \mathbb{P}(\vh_u^{\left(L_{u,v}\right)} \neq \vh_v^{\left(L_{u,v}\right)}) \ge 1 - \delta, \text{ with }         \alpha= \max_{u,v\in V\setminus I}\left(\max_{\ell\in [L_{u,v}]}{\left(p^\stepl_{u,v}\right)}\right),
    \end{equation*}
    where $p^\stepl_{u,v}$ represents a lower bound on the probability for the representations of nodes $u,v\in V$ at layer $\ell$ being different.

    We use the same \cognn construction given in  \cref{lemma:node-expresivity} on $G$, which ensures that all non-isolated nodes have different representations in $G$. When applying this \cognn to $G_1$ and $G_2$ separately, we get that every non-isolated node from either graph has a different representation with probability $1-\delta$ as a result. Hence, the multiset $\mathcal{M}_1$ of node features for $G_1$ and the multiset $\mathcal{M}_2$ of node features of $G_2$ must differ. Assuming an injective pooling function from these multisets to graph-level representations, we get:
    \begin{equation*}
        \mathbb{P}(\vz_{G_1}^{\left(L\right)} \neq \vz_{G_2}^{\left(L\right)}) \ge 1 - \delta
    \end{equation*}
    for $L=|V\setminus I| \log_{1 - \alpha}\left(\delta\right)$.
\end{proof}

\subsection{Proof of \cref{prop:long-range}}

\begin{propositioncopy}{\ref{prop:long-range}}
Let $G=(V,E,\mX)$ be a connected graph with node features. For some $k>0$, for any target node $v \in V$, for any $k$ source nodes $u_1, \ldots, u_k \in V$, and for any compact, differentiable function $f: \mathbb{R}^{d^{(0)}} \times \ldots \times \mathbb{R}^{d^{(0)}} \to \mathbb{R}^d$, there exists an $L$-layer $\cognn$ computing final node representations such that for any $\epsilon, \delta >0$ it holds that $\mathbb{P} (|\vh_v^{(L)} - f(\vx_{u_1}, \ldots \vx_{u_k})| < \epsilon) \geq 1-\delta$.
\end{propositioncopy}
\begin{proof}
    For arbitrary $\epsilon,\delta>0$, we start by constructing a feature encoder $\operatorname{ENC}: \sR^{d^{(0)}}\rightarrow\sR^{2(k + 1){d^{(0)}}}$ which encodes the initial representations $\vx_w\in\sR^{d^{(0)}}$ of each node $w$ as follows:
    \[
    \operatorname{ENC}(\vx_w) = \big[ \underbrace{\Tilde{\vx}_w^\top \oplus \ldots \oplus \Tilde{\vx}_w^\top}_{k+1} \big]^\top,
    \]
    where $\Tilde{\vx}_w= [\operatorname{ReLU}(\vx_w^\top) \oplus \operatorname{ReLU}(-\vx_w^\top)]^\top$.
    Observe that this encoder can be parametrized using a 2-layer MLP, and that $\Tilde{\vx}_w$ can be decoded using a single linear layer to get back to the initial features:
    \[
    \operatorname{DEC}(\Tilde{\vx}_w) = \operatorname{DEC} \left(\big[\operatorname{ReLU}(\vx_w^\top) \oplus \operatorname{ReLU}(-\vx_w^\top)\big]^\top \right) = \vx_w  
    \]

    \textbf{Individualizing the Graph.} Importantly, we encode the features using ${2(k + 1){d^{(0)}}}$ dimensions in order to be able to preserve the original node features. Using the construction from \cref{lemma:node-expresivity}, we can ensure that every pair of nodes in the connected graph have different features with probability $1-\delta_1$. However, if we do this na\"ively, then the node features will be changed before we can transmit them to the target node. We therefore make sure that the width of the \cognn architecture from  \cref{lemma:node-expresivity} is increased to ${2(k + 1){d^{(0)}}}$ dimensions such that it applies the identity mapping on all features beyond the first $2d^{(0)}$ components. This way we make sure that all feature components beyond the first ${2{d^{(0)}}}$ components are preserved. The existence of such a \cognn is straightforward since we can always do an identity mapping using base environment models such \sumgnns. We use $L_1$ \cognn layers for this part of the construction.
    
    In order for our architecture to retain a positive representation for all nodes, we now construct 2 additional layers which encode the representation $\vh_w^{(L)}\in\sR^{2(k + 1)d^{(0)}}$ of each node $w$ as follows:
    \[
     [\operatorname{ReLU}(\vq_w^\top) \oplus \operatorname{ReLU}(-\vq_w^\top) \oplus \Tilde{\vx}_w^\top \oplus \ldots \oplus \Tilde{\vx}_w^\top]^\top
    \]
    where $\vq_w\in\sR^{2d^{(0)}}$ denotes a vector of the first $2d^{(0)}$ entries of $\vh_w^{(L_1)}$.
    
    \textbf{Transmitting Information.} Consider a shortest path $u_1 = w_0 \rightarrow w_1 \rightarrow \cdots \rightarrow w_r \rightarrow w_{r+1} = v$ of length $r_1$ from node $u_1$ to node $v$. We use exactly $r_1$ \cognn layers in this part of the construction. For the first these layers, the action network assigns the following actions to these nodes: 
    \begin{enumerate}[$\bullet$]
        \item $w_0$ performs the action $\broadcastlong$, 
        \item $w_{1}$ performs the action $\listenlong$, and
        \item  all other nodes are perform the action $\isolatelong$. 
    \end{enumerate}
    This is then repeated in the remaining layers, for all consecutive pairs $w_i$, $w_{i+1}$, $0 \leq i \leq r$ until the whole path is traversed. That is, at every layer, all graph edges are removed except the one between $w_i$ and $w_{i+1}$, for each $0 \leq i \leq r$. By construction each element in the node representations is positive and so we can ignore the ReLU.
    
    We apply the former construction such that it acts on entries $2d^{(0)}$ to $3d^{(0)}$ of the node representations, resulting in the following representation for node $v$:
    \[
    [\operatorname{ReLU}(\vq_w^\top) \oplus \operatorname{ReLU}(-\vq_w^\top) \oplus \Tilde{\vx}_{u_1} \oplus \Tilde{\vx}_{v} \oplus \ldots \oplus \Tilde{\vx}_{v}]
    \]
    where $\vq_w\in\sR^{2d^{(0)}}$ denotes a vector of the first $2d^{(0)}$ entries of $\vh_w^{(L_1)}$.
    
    We denote the probability in which node $y$ does not follow the construction at stage $1 \leq t \leq r$ by $\beta_y^{(t)}$ such that the probability that all graph edges are removed except the one between $w_i$ and $w_{i+1}$ at stage $t$ is lower bounded by $\left(1 - \beta\right)^{\lvert V\rvert}$, where $\beta=\max_{y\in V}(\beta_y)$. Thus, the probability that the construction holds is bounded by $\left(1 - \beta\right)^{|V| r_1}$.

    The same process is then repeated for nodes $u_i$, $2\leq i\leq k$, acting on the entries $(k+1)d^{(0)}$ to $(k + 2)d^{(0)}$ of the node representations and resulting in the following representation for node $v$:
    \[
    [\operatorname{ReLU}(\vq_w^\top) \oplus \operatorname{ReLU}(-\vq_w^\top) \oplus \Tilde{\vx}_{u_1} \oplus \Tilde{\vx}_{u_2} \oplus \ldots \oplus \Tilde{\vx}_{u_k}]
    \]
    
    In order to decode the positive features, we construct the feature decoder $\operatorname{DEC}': \sR^{2(k + 2){d^{(0)}}}\rightarrow\sR^{(k + 1){d^{(0)}}}$, that for $1\leq i\leq k$ applies $\operatorname{DEC}$ to entries $2(i+1)d^{(0)}$ to $(i + 2)d^{(0)}$ of its input as follows:
    \[
    [\operatorname{DEC}(\Tilde{\vx}_{u_1})\oplus\ldots\oplus\operatorname{DEC}(\Tilde{\vx}_{u_k})] = [\vx_{u_1}\oplus \ldots\oplus \vx_{u_k}]
    \]
    
    Given $\epsilon, \delta$, we set:
    \[\delta_2 = 1- \frac{1 - \delta}{(1 - \delta_1)\left(1 - \beta\right)^{|V| \sum_{i=1}^{k+1} r_i}} >0.\]
    
    Having transmitted and decoded all the required features into $\vx = [\vx_1\oplus \ldots \oplus \vx_k]$, where $\vx_i$ denotes the vector of entries $id^{(0)}$ to $(i + 1)d^{(0)}$ for $0\leq i \leq k$, we can now use an $\operatorname{MLP}: \sR^{(k + 1){d^{(0)}}}\rightarrow\sR^{d}$ and the universal approximation property to map this vector to the final representation $\vh_v^{(L)}$ such that:  
    \begin{align*}
        \mathbb{P} (|\vh_v^{(L)} - f(\vx_{u_1}, \ldots \vx_{u_k})| < \epsilon) &\ge (1- \delta_1)\left(1 - \beta\right)^{|V| \sum_{i=1}^{k+1} r_i}(1- \delta_2)\ge 1 - \delta.
    \end{align*}
The construction hence requires  $\left(L=L_1 + 2 + \sum_{i=0}^k r_i\right)$ $\cognn$ layers.
\end{proof}

\section{Relation to Over-squashing}
\label{app:oversquashing}

Over-squashing refers to the failure of message passing to propagate information on the graph. \citet{topping2022understanding} and \citet{di2023over} formalized over-squashing as the insensitivity of an $r$-layer MPNN output at node $u$ to the input features of a distant node $v$, expressed through a bound on the Jacobian
$
\|\partial \boldsymbol{h}_v^{(r)}/\partial \boldsymbol{x}_u\| \leq C^{r}(\hat{\boldsymbol{A}}^{r})_{v u}
$, 
where $C$ encapsulated architecture-related constants (e.g., width, smoothness of the activation function, etc.) and the normalized adjacency matrix $\hat{\boldsymbol{A}}$ captures the effect of the graph.  
Graph rewiring techniques amount to modifying $\hat{\boldsymbol{A}}$ so as to increase the upper bound and thereby reduce the effect of over-squashing.

Observe that the actions of every node in \cognns result in an effective graph rewiring (different at every layer). As a result, the action network can choose actions that transmit the features of node $u\in V$ to node $v\in V$ as shown in \cref{prop:long-range}, resulting in the maximization of the bound on the Jacobian between a pair of nodes or ($k$ nodes, for some fixed $k$).

\section{Additional Experiments}
\label{app:additional-experiments}
\subsection{Expressivity Experiment}
\label{app:cycles}
In \Cref{prop:expresivity} we state that $\cognns$ can distinguish between pairs of graphs which are 1-WL indistinguishable. We validate this with a simple synthetic dataset: \cycles. \cycles consists of $7$ pairs of undirected graphs, where the first graph is a $k$-cycle for $k \in [6,12]$ and the second graph is a disjoint union of a $(k{-}3)$-cycle and a triangle. The train/validation/test set are the $k\in[6,7]/[8,9]/[10,12]$ pairs, correspondingly. The task is to correctly identify the cycle graphs. As the pairs are 1-WL indistinguishable, solving this task implies a strictly higher expressive power than 1-WL. 

Our main finding is that $\cosum$ and $\comean$ achieve $100\%$ accuracy, perfectly classifying the cycles, whereas their corresponding classical $\sumgnn$ and $\meangnn$ achieve a random guess accuracy of $50 \%$. These results imply that \cognn can increase the expressive power of their classical counterparts. We find the model behaviour rather volatile during training, which necessitated  careful tuning of hyperparameters.

\subsection{Long-range Interactions}
\label{app:long-range-exp}

\begin{wraptable}{r}{0.35\textwidth} 
\caption{Results on LRGB. Top three models are colored by \red{First}, \blue{Second}, \gray{Third}.}
\label{tab:lrgb_result}
   \centering
  \begin{tabular}{lcc}
    \toprule
    & Peptides-func \\
    \midrule
    GCN & 0.6860 \stdfont{$\pm$ 0.0050}\\
    GINE & 0.6621 \stdfont{$\pm$ 0.0067}\\
    GatedGCN & 0.6765 \stdfont{$\pm$ 0.0047}\\
    CRaWl & \blue{0.7074} \stdfont{$\pm$ 0.0032}\\
    DRew & \red{0.7150} \stdfont{$\pm$ 0.0044}\\
    Exphormer & 0.6527 \stdfont{$\pm$ 0.0043}\\
    GRIT & 0.6988 \stdfont{$\pm$ 0.0082}\\
    Graph-ViT & 0.6942 \stdfont{$\pm$ 0.0075}\\
    G-MLPMixer & 0.6921 \stdfont{$\pm$ 0.0054}\\
    \midrule
    $\cogcn$  & \gray{0.6990} \stdfont{$\pm$ 0.0093}\\
    $\cogin$  & 0.6963 \stdfont{$\pm$ 0.0076}\\
    \bottomrule
    \end{tabular}
\end{wraptable}
To validate the performance of \cognns on long-range tasks, we experiment with the LRGB benchmark~ \citep{dwivedi2023long}.

\textbf{Setup.} We train $\cogcn$ and $\cogin$ $\cogin$ on LRGB and report the unweighted mean Average Precision (AP) for Peptides-func. All experiments are run 4 times with 4 different seeds and follow the data splits provided by \citet{dwivedi2023long}. Following the methodology of \citet{tonshoff23}, we used AdamW as optimizer and cosine-with-warmup scheduler. We also use the provided results for GCN, GCNII \citep{chen2020simple}, GINE, GatedGCN \citep{bresson2018residual}, CRaWl \citep{onshoff2023walking}, DRew \citep{gutteridge2023drew}, Exphormer \citep{Shirzad2023ExphormerST}, GRIT \citep{GRIT}, Graph-ViT / G-MLPMixer \citep{he2023graphvitmlp}.

\textbf{Results.} We follow \citet{tonshoff23} who identified that the previously reported large performance gaps between classical MPNNs and transformer-based models can be closed by a more extensive tuning of MPNNs. In light of this, we note that the performance gap between different models is not large. Classical MPNNs such as GCN, GINE, and GatedGCN surpass some transformer-based approaches such as Exphormer. $\cogcn$ further improves on the competitive GCN and is the third best performing model after DRew and CRaWl. Similarly, $\cogin$ closely matches $\cogcn$ and is substantially better than its base architecture GIN. This experiment further suggests that exploring different classes of \cognns is a promising direction, as \cognns typically boost the performance of their underlying base architecture.

\subsection{Graph Classification}
\label{app:graph-classification}
In this experiment, we evaluate $\cognns$ on the TUDataset \citep{Morris2020} graph classification benchmark.

\textbf{Setup.} We evaluate $\cosum$ and $\comean$ on the 7 graph classification benchmarks, following the risk assessment protocol of \citet{errica2022fair},  and report the mean accuracy and standard deviation.
The results for the baselines DGCNN  \citep{wang2019dynamic}, DiffPool \citep{ying2019hierarchical},   Edge-Conditioned Convolution (ECC) \citep{simonovsky2017dynamic}, GIN, GraphSAGE are from \citet{errica2022fair}. We also include $\text{CGMM}$ \citep{JMLR:v21:19-470}, $\text{ICGMM}_f$ \citep{castellana22a}, SPN($k=5$) \citep{AbboudDC22}
and $\text{GSPN}$ \citep{errica2023tractable} as more recent baselines. OOR (Out of Resources) implies extremely long training time or GPU memory usage. We use Adam optimizer and StepLR learn rate scheduler, and report all hyperparameters in the appendix (\cref{tab:graph-classification-hps}).

\begin{table*}[t!]
\caption{Results on graph classification. Top three models are colored by \red{First}, \blue{Second}, \gray{Third}.}
  \centering
  \begin{tabular}{l@{\hspace{2pt}}c@{\hspace{5pt}}c@{\hspace{5pt}}c@{\hspace{5pt}}c@{\hspace{5pt}}c@{\hspace{5pt}}c@{\hspace{5pt}}c@{\hspace{5pt}}}
    \toprule
    & IMDB-B & IMDB-M & REDDIT-B & REDDIT-M & NCI1 & PROTEINS & ENZYMES \\
    \midrule
    DGCNN & 69.2 \stdfont{$\pm$ 3.0}            & 45.6 \stdfont{$\pm$ 3.4}            & 87.8 \stdfont{$\pm$ 2.5} & 49.2 \stdfont{$\pm$ 1.2} & 76.4 \stdfont{$\pm$ 1.7}          & 72.9 \stdfont{$\pm$ 3.5}        & 38.9 \stdfont{$\pm$ 5.7} \\
    DiffPool & 68.4 \stdfont{$\pm$ 3.3}            & 45.6 \stdfont{$\pm$ 3.4}            & {89.1} \stdfont{$\pm$ 1.6} & 53.8 \stdfont{$\pm$ 1.4} & 76.9 \stdfont{$\pm$ 1.9}          & \blue{73.7} \stdfont{$\pm$ 3.5}  & 59.5 \stdfont{$\pm$ 5.6} \\
    ECC & 67.7 \stdfont{$\pm$ 2.8}            & 43.5 \stdfont{$\pm$ 3.1}            & OOR & OOR & 76.2 \stdfont{$\pm$ 1.4}       & 72.3 \stdfont{$\pm$ 3.4} & 29.5 \stdfont{$\pm$ 8.2} \\
    GIN & \gray{71.2} \stdfont{$\pm$ 3.9}     & \gray{48.5} \stdfont{$\pm$ 3.3}     & \gray{89.9} \stdfont{$\pm$ 1.9} & \blue{56.1} \stdfont{$\pm$ 1.7} & \blue{80.0} \stdfont{$\pm$ 1.4}   & \gray{73.3} \stdfont{$\pm$ 4.0} & {59.6} \stdfont{$\pm$ 4.5} \\
    GraphSAGE & 68.8 \stdfont{$\pm$ 4.5}            & 47.6 \stdfont{$\pm$ 3.5}            & 84.3 \stdfont{$\pm$ 1.9} & 50.0 \stdfont{$\pm$ 1.3} & 76.0 \stdfont{$\pm$ 1.8}          & 73.0 \stdfont{$\pm$ 4.5} & 58.2 \stdfont{$\pm$ 6.0} \\
    $\text{CGMM}$ & - & - & 88.1 \stdfont{$\pm$ 1.9} & 52.4 \stdfont{$\pm$ 2.2} &  76.2 \stdfont{$\pm$ 2.0} & - &-\\
    $\text{ICGMM}_f$ & \blue{71.8} \stdfont{$\pm$ 4.4} & \blue{49.0} \stdfont{$\pm$ 3.8} & \red{91.6} \stdfont{$\pm$ 2.1} & \gray{55.6} \stdfont{$\pm$ 1.7} &  76.4 \stdfont{$\pm$ 1.4} & 73.2 \stdfont{$\pm$ 3.9} &  - \\
    SPN($k=5$) & - & - & - & - & 78.6 \stdfont{$\pm$ 1.7} &  \red{74.2} \stdfont{$\pm$ 2.7} &  \red{69.4} \stdfont{$\pm$ 6.2} \\
    $\text{GSPN}$ & - & - & \blue{90.5} \stdfont{$\pm$ 1.1} & {55.3} \stdfont{$\pm$ 2.0} & 76.6 \stdfont{$\pm$ 1.9} & - & - \\
    \midrule
    $\cosum$   & 70.8 \stdfont{$\pm$ 3.3}   & \gray{48.5} \stdfont{$\pm$ 4.0}   & 88.6 \stdfont{$\pm$ 2.2} & 53.6 \stdfont{$\pm$ 2.3} & \red{80.6} \stdfont{$\pm$ 1.1} & 73.1 \stdfont{$\pm$ 2.3} & \gray{65.7} \stdfont{$\pm$ 4.9} \\
    $\comean$  & \red{72.2} \stdfont{$\pm$ 4.1}   & \red{49.9} \stdfont{$\pm$ 4.5}   & \blue{90.5} \stdfont{$\pm$ 1.9} & \red{56.3} \stdfont{$\pm$ 2.1} & \gray{79.4} \stdfont{$\pm$ 0.7} & 71.3 \stdfont{$\pm$ 2.0} & \blue{68.3} \stdfont{$\pm$ 5.7} \\
    \bottomrule
  \end{tabular}
  \label{tab:graph-classification}
\end{table*}

\textbf{Results.} $\cognn$ models achieve the highest accuracy on three datasets in \Cref{tab:graph-classification} and remain competitive on the other datasets. $\cognn$ yield these performance improvements, despite using relatively simple action and environment networks, which is intriguing as \cognns unlock a large design space which includes a large class of model variations.

\subsection{Homophilic Node Classification}
\begin{wraptable}{r}{0.45\textwidth} 
\caption{Results on homophilic datasets. Top three models are colored by \red{First}, \blue{Second}, \gray{Third}.}
\vspace{-0.5em}
  \centering
  \begin{tabular}{l@{\hspace{2pt}}ccc}
    \toprule
    & pubmed & cora\\
    \midrule
    MLP        & 87.16 \stdfont{$\pm$ 0.37}        & 75.69 \stdfont{$\pm$ 2.00}\\
    GCN        & 88.42 \stdfont{$\pm$ 0.50}        & \gray{86.98} \stdfont{$\pm$ 1.27}\\
    GraphSAGE  & 88.45 \stdfont{$\pm$ 0.50}        & 86.90 \stdfont{$\pm$ 1.04}\\
    GAT        & 87.30 \stdfont{$\pm$ 1.10}        & 86.33 \stdfont{$\pm$ 0.48}\\
    Geom-GCN   & 87.53 \stdfont{$\pm$ 0.44}        & 85.35 \stdfont{$\pm$ 1.57}\\
    GCNII      & \red{90.15} \stdfont{$\pm$ 0.43}  & \red{88.37} \stdfont{$\pm$ 1.25}\\
    \midrule
    $\sumgnn$  & 88.58 \stdfont{$\pm$ 0.57}        & 84.80 \stdfont{$\pm$ 1.71} \\
    $\meangnn$ & 88.66 \stdfont{$\pm$ 0.44}        & 84.50 \stdfont{$\pm$ 1.25} \\
    \midrule
    $\cosum$   & 89.39 \stdfont{$\pm$ 0.39} & 86.43 \stdfont{$\pm$ 1.28}\\
    $\comean$  & \blue{89.60} \stdfont{$\pm$ 0.42} & 86.53 \stdfont{$\pm$ 1.20}\\
    $\cogcn$  & \gray{89.51} \stdfont{$\pm$ 0.88} & \blue{87.44} \stdfont{$\pm$ 0.85}\\
    \bottomrule
  \end{tabular}
  \label{tab:homophilic}
\end{wraptable}
In this experiment, we evaluate $\cognns$ on the homophilic node classification benchmarks cora and pubmed \citep{sen2008collective}.

\textbf{Setup.} We assess $\meangnn$, $\sumgnn$ and their corresponding $\cognns$ counterparts $\comean$ and $\cosum$ on the homophilic graphs and their 10 fixed splits provided by \citet{pei2020geomgcn}, where we report the mean accuracy, standard deviation and the accuracy gain due to the application of $\cognn$. We also use the results provided by \citet{bodnar2023neural} for the classical baseline: GCN, GraphSAGE, GAT, Geom-GCN \citep{pei2020geomgcn} and GCNII.

\textbf{Results.} \cref{tab:homophilic} illustrates a modest performance increase of 1-2\% across all datasets when transitioning from $\sumgnn$, $\meangnn$, and GCN to their respective $\cognn$ counterparts. These datasets are highly homophilic, but $\cognns$ nonetheless show improvements on these datasets (even though, modest) compared to their environment/action network architectures. 

\begin{figure*}[t!]
    \centering
    \includegraphics[width=0.45\linewidth]{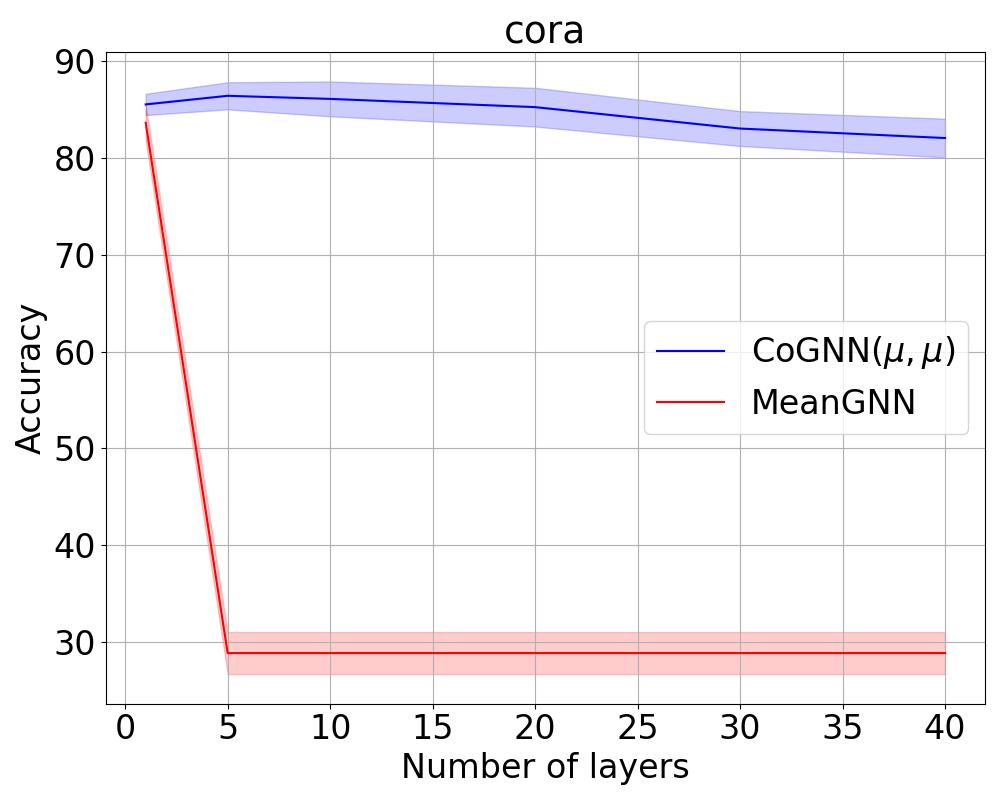}
    \hfill
    \includegraphics[width=0.45\linewidth]{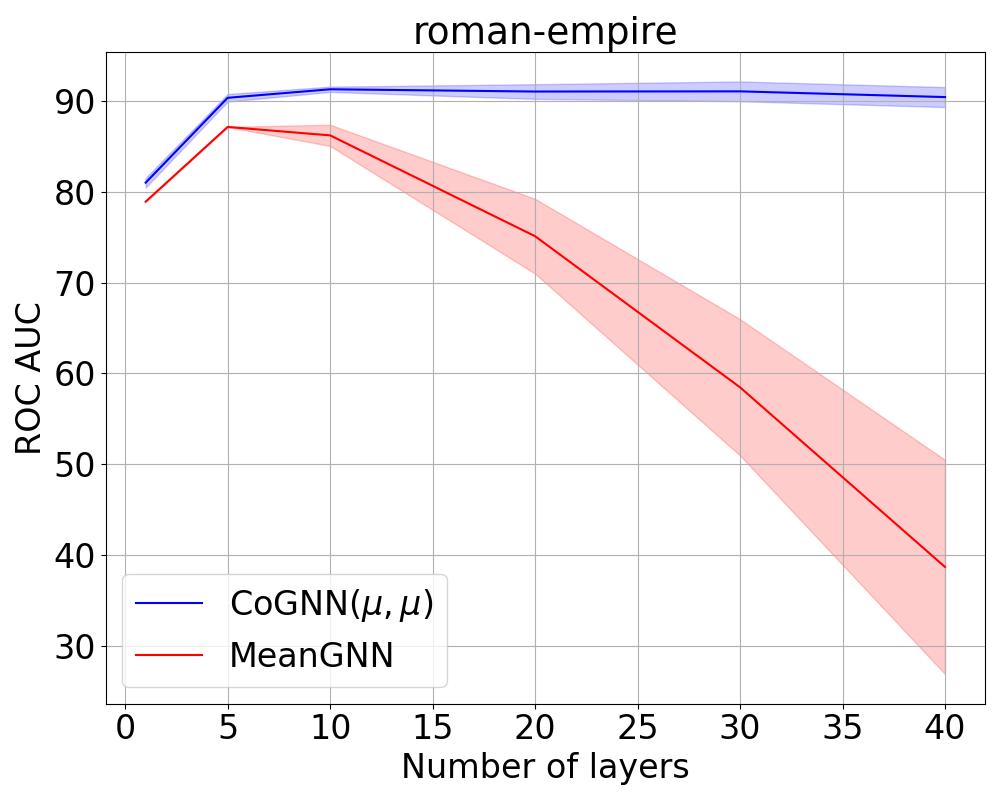}
    \caption{The accuracy of $\comean$ and $\meangnn$ on cora (\textbf{left}) and on roman-empire (\textbf{right}) for an increasing number of layers.}
    \label{fig:oversmoothing}
\end{figure*}
\subsection{Over-smoothing Experiments}
\label{app:oversmoothing}
\cref{subsec:conceptual} explains that \cognns can mitigate the over-smoothing phenomenon, through the choice of $\broadcastlong$ or $\isolatelong$ actions. To validate this, we experiment with an increasing number of layers of $\comean$ and $\meangnn$ over the cora and roman-empire datasets.

\textbf{Setup.} We evaluate $\comean$ and $\meangnn$ over the cora and roman-empire datasets, following the 10 data splits of \citet{pei2020geomgcn} and \citet{platonov2023critical}, respectively. We report the accuracy and standard deviation.

\textbf{Results.} \cref{fig:oversmoothing} indicates that the performance is generally retained for deep models and that \cognns are effective in alleviating the over-smoothing phenomenon even though their base GNNs suffer from performance deterioration already with a few layers.

\section{Runtime Analysis}
\label{app:runtime}

\begin{figure}[ht!]
    \centering
    \includegraphics[width=0.7\textwidth]{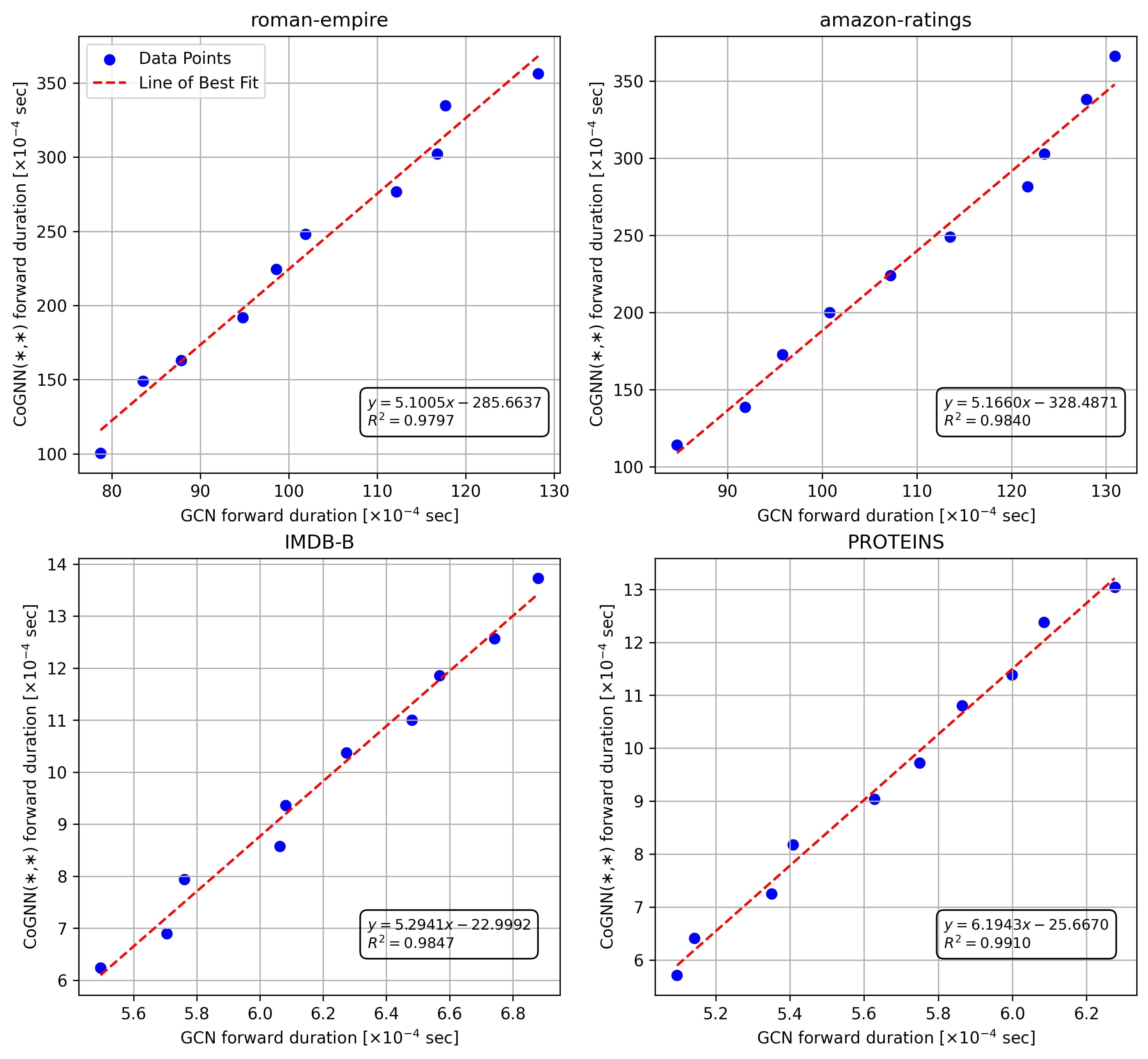}
    \caption{Empirical runtimes: $\cogcn$ forward pass duration as a function of GCN forward pass duration.}
    \label{fig:runtime}
\end{figure}

Consider a GCN model with $L$ layers and a hidden dimension of $d$ on an input graph $G=(V,E,\mX)$. \citet{Wu2019ACS} has shown the time complexity of this model to be $\mathcal{O}(Ld(|E|d + |V|))$.  To extend this analysis to \cognns, let us consider a $\cogcn$ architecture composed of: 
\begin{enumerate}[$\bullet$]
    \item a GCN environment network $\env$ with $L_{\env}$ layers and hidden dimension of $d_{\env}$, and
    \item a GCN action network $\action$ with $L_{\action}$ layers and hidden dimension of $d_{\action}$. 
\end{enumerate}

A single $\cognn$ layer first computes the actions for each node by feeding node representations through the action network $\action$, which is then used in the aggregation performed by the environment layer. This means that the time complexity of a single $\cognn$ layer is 
$\mathcal{O}(L_{\action}d_{\action}(|E|d_{\action} + |V|) + d_{\env}(|E|d_{\env} + |V|))$. The time complexity of the whole $\cognn$ architecture is then  $\mathcal{O}(L_{\env}L_{\action}d_{\action}(|E|d_{\action} + |V|) + L_{\env}d_{\env}(|E|d_{\env} + |V|))$.

Typically, the hidden dimensions of the environment network and action network match. In all of our experiments, the depth of the action network $L_{\action}$ is much smaller (typically $\leq 3$) than that of the environment network $L_{\env}$. Therefore, assuming $L_{\action} << L_{\env}$ we get that a runtime complexity of $\mathcal{O}(L_{\env}d_{\env}(|E|d_{\env} + |V|))$, matching the runtime of a GCN model. 

To empirically confirm the efficiency of \cognns, we report in \Cref{fig:runtime} the duration of a forward pass of a $\cogcn$ and GCN with matching hyperparameters across multiple datasets. From \Cref{fig:runtime}, it is evident that the increase in runtime is linearly related to its corresponding base model with $R^2$ values higher or equal to $0.98$ across $4$ datasets from different domains. Note that, for the datasets IMDB-B and PROTEINS, we report the average forward duration for a single graph in a batch. 

\section{Further Details of the Experiments Reported in the Paper}
\label{app:dataset-details}

\subsection{The Gumbel Distribution and the Gumbel-softmax Temperature}
\label{app:gumbel}

The Gumbel distribution is used to model the distribution of the maximum (or the minimum) of a set of random variables. 
\begin{wrapfigure}{r}{0.44\textwidth} 
    \centering
    \includegraphics[width=1\linewidth]{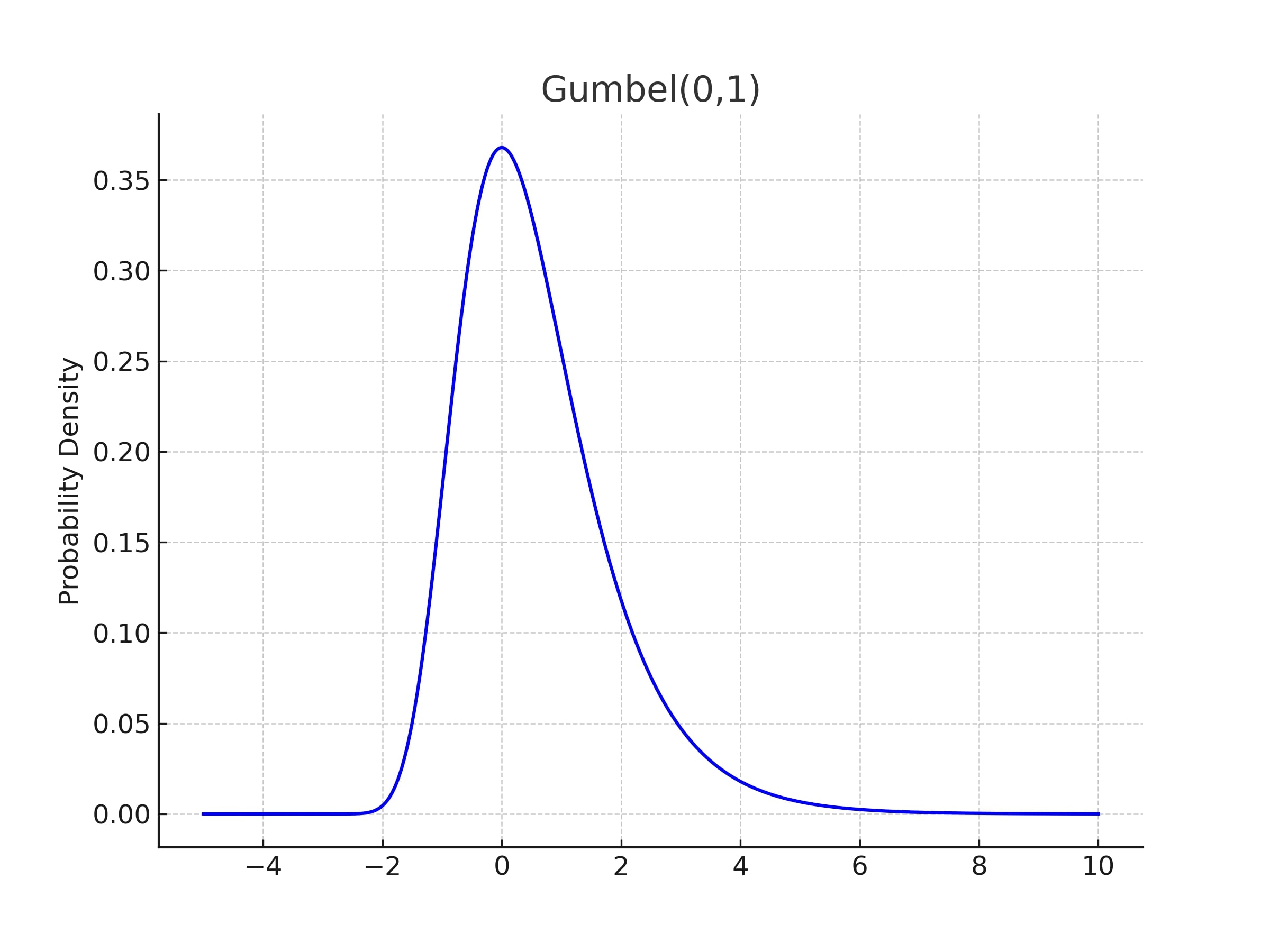}
    \caption{The pdf $f(x) = e^{-x + e^{-x}}$ of $\operatorname{Gumbel}(0,1)$.}
    \label{fig:gumbel_distribution}
\end{wrapfigure}
Its probability density function has a distinctive, skewed shape, with heavy tails, making it a valuable tool for analyzing and quantifying the likelihood of rare and extreme occurrences. 
By applying the Gumbel distribution to the logits or scores associated with discrete choices, the Gumbel-Softmax estimator transforms them into a probability distribution over the discrete options. 
The probability density function of a variable that follows $X\sim \operatorname{Gumbel}(0,1)$ is 
$f(x) = e^{-x + e^{-x}}$ (\Cref{fig:gumbel_distribution}).

The Straight-through Gumbel-softmax estimator is known to benefit from learning an inverse-temperature before sampling an action, which we use in our experimental setup. 
For a given graph $G=(V, E, \mX)$ the inverse-temperature of node $v\in V$ is estimated by applying a bias-free linear layer $L:\sR^{d}\rightarrow\sR$ to the intermediate representation $\vh \in \sR^d$. To ensure the temperature is positive, an approximation of the ReLU function with a bias hyperparameter $\tau\in\sR$ is subsequently applied:
\begin{equation*}
    \frac{1}{\tau\left(\vh\right)} = \log \left(1+\exp \left(\omega^T\vh\right)\right) + \tau_0
\end{equation*}
where $\tau_0$ controls the maximum possible temperature value.

\subsection{Dataset Statistics}
The statistics of the real-world long-range, node-based, and graph-based benchmarks used can be found in \cref{tab:hetero-node-classification-stats,tab:long-range-stats,tab:graph-classification-stats,tab:homo-node-classification-stats}.

\begin{table}[h!]
\caption{Statistics of the heterophilic node classification benchmarks.}
  \centering
  \begin{tabular}{l@{\hspace{6pt}}c@{\hspace{6pt}}c@{\hspace{6pt}}c@{\hspace{6pt}}c@{\hspace{6pt}}c@{\hspace{6pt}}}
    \toprule
      & roman-empire & amazon-ratings & minesweeper & tolokers & questions \\
      \midrule
      \# nodes & 22662 & 24492 & 10000 & 11758 & 48921 \\
      \# edges & 32927 & 93050 & 39402 & 519000 & 153540 \\
      \# node features & 300 & 300 & 7 & 10 & 301 \\
      \# classes & 18 & 5 & 2 & 2 & 2\\
      edge homophily & 0.05 & 0.38 & 0.68 & 0.59 & 0.84 \\
      metrics & ACC & ACC & AUC-ROC & AUC-ROC & AUC-ROC \\
    \bottomrule
  \end{tabular}
  \label{tab:hetero-node-classification-stats}
\end{table}

\begin{table}[h!]
\caption{Statistics of the long-range graph benchmarks (LRGB).}
  \centering
  \begin{tabular}{lc}
    \toprule
      & Peptides-func  \\
      \midrule
    \# graphs & 15535 \\
    \# average nodes & 150.94 \\
    \# average edges & 307.30 \\
    \# classes & 10   \\
    metrics & AP  \\
    \bottomrule
  \end{tabular}
  \label{tab:long-range-stats}
\end{table}

\begin{table}[h!]
\caption{Statistics of the graph classification benchmarks.}
  \centering
  \begin{tabular}{lcccccc}
    \toprule
      & IMDB-B & IMDB-M & REDDIT-B & NCI1 & PROTEINS & ENZYMES\\
      \midrule
    \# graphs & 1000 & 1500 & 2000 & 4110 & 1113 & 600  \\
    \# average nodes & 19.77 & 13.00 & 429.63 & 29.87 & 39.06 & 32.63 \\
    \# average edges & 96.53 & 65.94 & 497.75 & 32.30 & 72.82 & 64.14 \\
    \# classes & 2 & 3 & 2 & 2 & 2 & 6 \\
    metrics & ACC & ACC & ACC & ACC & ACC & ACC \\
    \bottomrule
  \end{tabular}
  \label{tab:graph-classification-stats}
\end{table}

\begin{table}[h!]
\caption{Statistics of the homophilic node classification benchmarks.}
  \centering
  \begin{tabular}{l@{\hspace{6pt}}c@{\hspace{6pt}}c@{\hspace{6pt}}}
    \toprule
      & pubmed & cora \\
      \midrule
      \# nodes & 18717 & 2708 \\
      \# edges & 44327 & 5278 \\
      \# node features & 500 & 1433 \\
      \# classes & 3 & 6\\
      edge homophily & 0.80 & 0.81 \\
      metrics & ACC & ACC \\
    \bottomrule
  \end{tabular}
  \label{tab:homo-node-classification-stats}
\end{table}

\subsection{\rootn: Dataset Generation}
\label{app:synthetic_generation}
In \cref{subsec:synthetic}, we compare \cognns to a class of MPNNs on a dedicated synthetic dataset \rootn in order to assess the model's ability to redirect the information flow. \rootn consists of 3000 trees of depth 2 with random node features of dimension $d = 5$ which is generated as follows:
\begin{enumerate}[$\bullet$,leftmargin=.6cm]
    \item \textbf{Features}: Each feature is independently sampled from a uniform distribution $U[-2, 2]$.
    \item \textbf{Level-1 Nodes}: The number of nodes in the first level of each tree in the train, validation, and test set is sampled from a uniform distribution $U[3, 10]$, $U[5, 12]$, and $U[5, 12]$ respectively. Then, the degrees of the level-1 nodes are sampled as follows:
    \begin{enumerate}[$\bullet$]
        \item The number of level-1 nodes with a degree of $6$ is sampled independently from a uniform distribution $U[1, 3], U[3, 5], U[3, 5]$  for the train, validation, and test set, respectively.
        \item The degree of the remaining level-1 nodes are sampled from the uniform distribution $U[2, 3]$.
    \end{enumerate}
\end{enumerate}
We use a train, validation, and test split of equal size.

\subsection{Hyperparameters for all Experiments}
\label{app:hyperparameters}
In \cref{tab:synthetic-hps,tab:long-range-hps,tab:heterophilic-node-classification-hps,tab:graph-classification-hps,tab:homophilic-node-classification-hps}, we report the hyperparameters used in our experiments.

\begin{table}[ht!]
\caption{Hyperparameters used for \rootn and \cycles.}
  \centering
  \begin{tabular}{l@{\hspace{6pt}}c@{\hspace{6pt}}c@{\hspace{6pt}}}
    \toprule
      &  \rootn & \cycles \\
      \midrule
      $\env$ \# layers & 1  & 2\\
      $\env$ dim       & 16, 32 & 32\\
      $\action$ \# layers & 1, 2 & 6\\
      $\action$ dim       & 8, 16 & 32\\
      learned temp        & \checkmark & - \\
      temp & - & 1\\
      $\tau_0$ & 0.1 & -\\
      \# epochs & 10000 & 1000\\
      \midrule
      dropout & 0 & 0\\
      learn rate & $10^{-3}$ & $ 10^{-3}$\\
      batch size & - & 14\\
      pooling & - & sum\\
    \bottomrule
  \end{tabular}
  \label{tab:synthetic-hps}
\end{table}

\begin{table}[ht!]
\caption{Hyperparameters used for the heterophilic node classification benchmarks.}
  \centering
  \begin{tabular}{l@{\hspace{4pt}}c@{\hspace{4pt}}c@{\hspace{4pt}}c@{\hspace{4pt}}c@{\hspace{4pt}}c@{\hspace{4pt}}}
    \toprule
      & roman-empire & amazon-ratings & minesweeper & tolokers & questions \\
      \midrule
      $\env$ \# layers               & 5-12 & 5-10 & 8-15 & 5-10 & 5-9\\
      $\env$ dim                     & 128,256,512 & 128,256 & 32,64,128 & 16,32 & 32,64 \\
      $\action$ \# layers            & 1-3 & 1-6 & 1-3 & 1-3 & 1-3\\
      $\action$ dim                  & 4,8,16 & 4,8,16,32 & 4,8,16,32,64 & 4,8,16,32 & 4,8,16,32\\
      learned temp                   & \checkmark & \checkmark & \checkmark & \checkmark & \checkmark\\
      $\tau_0$                       & 0,0.1 & 0,0.1 & 0,0.1 & 0,0.1 & 0,0.1\\
      \# epochs                      & 3000 & 3000 & 3000 & 3000 & 3000 \\
      \midrule
      dropout                        & 0.2 & 0.2 & 0.2 & 0.2 & 0.2\\
      learn rate                     & $3\cdot 10^{-3},3\cdot 10^{-5}$& $3\cdot 10^{-4},3\cdot 10^{-5}$&$3\cdot 10^{-3},3\cdot 10^{-5}$&$3\cdot 10^{-3}$ & $ 10^{-3}, 10^{-2}$  \\
      activation function            & GeLU     & GeLU  & GeLU  & GeLU  & GeLU            \\
      skip connections               & \checkmark & \checkmark  & \checkmark  & \checkmark  & \checkmark         \\
      layer normalization            & \checkmark   & \checkmark  & \checkmark  & \checkmark  & \checkmark        \\
    \bottomrule
  \end{tabular}
  \label{tab:heterophilic-node-classification-hps}
\end{table}

\begin{table}[ht!]
\caption{Hyperparameters used for the long-range graph benchmarks (LRGB).}
  \centering
  \begin{tabular}{l@{\hspace{6pt}}c@{\hspace{6pt}}}
    \toprule
      &  Peptides-func\\
      \midrule
      $\env$ \# layers                    & 5-9 \\
      $\env$ dim                     & 200,300 \\
      $\action$ \# layers                 & 1-3\\
      $\action$ dim                      & 8,16,32\\
      learned temp                   & \checkmark \\
      $\tau_0$                             & 0.5\\
      \# epochs                           & 500\\
      \midrule
      dropout                             & 0\\
      learn rate                         & $3\cdot 10^{-4}, 10^{-3}$\\
      \# decoder layer              &  2,3 \\
      \# warmup epochs               &  5\\
      positional encoding       & LapPE, RWSE \\
      batch norm               & \checkmark \\
      skip connections               & \checkmark \\
    \bottomrule
  \end{tabular}
  \label{tab:long-range-hps}
\end{table}

\begin{table}[ht!]
\caption{Hyperparameters used for social networks and proteins datasets.}
  \centering
  \begin{tabular}{l@{\hspace{6pt}}c@{\hspace{6pt}}c@{\hspace{6pt}}c@{\hspace{6pt}}c@{\hspace{6pt}}c@{\hspace{6pt}}c@{\hspace{6pt}}c@{\hspace{6pt}}}
    \toprule
      &IMDB-B        & IMDB-M     & REDDIT-B & REDDIT-M   & NCI1                & PROTEINS & ENZYMES\\
      \midrule
      $\env$ \# layers               & 1             & 1           &  3,6 & 6         & 2,5                   & 3,5        & 1,2\\
      $\env$ dim                     & 32,64        &  64,256          & 128,256 & 64, 128          & 64,128,256         &  64       & 128,256\\
      $\action$ \# layers            & 2             &  3          &  1,2 & 1         & 2                    & 1,2        & 1\\
      $\action$ dim                  & 16,32             & 16           &  16,32 & 16         & 8, 16                 & 8        & 8\\
      learned temp.                  & \checkmark   & \checkmark & \checkmark & \checkmark & \checkmark          & \checkmark & \checkmark\\
      $\tau_0$                       & 0.1           & 0.1        & 0.1 & 0.1       & 0.5                 & 0.5        & 0.5\\
      \# epochs                      & 5000          & 5000       & 5000 & 5000      & 3000                & 3000       & 3000\\
      \midrule
      dropout                        & 0.5           & 0.5        & 0.5 & 0.5       & 0                   & 0          & 0\\
      learn rate                     & $10^{-4}$   & $10^{-3}$    &$10^{-3}$ & $10^{-4}$    & $10^{-3}$,$10^{-2}$  &   $10^{-3}$   & $10^{-3}$ \\
      pooling                        & mean          & mean       & mean & mean      & mean                & mean       & mean\\
      batch size                     & 32            & 32         & 32 & 32        & 32                  & 32         & 32\\
      scheduler step size            & 50            & 50         & 50 & 50        & 50                  & 50         & 50\\
      scheduler gamma                & 0.5           & 0.5        & 0.5 & 0.5       & 0.5                 & 0.5        & 0.5\\
    \bottomrule
  \end{tabular}
  \label{tab:graph-classification-hps}
\end{table}

\draft{\begin{table}[ht!]
\caption{Hyperparameters used for the homophilic node classification benchmarks.}
  \centering
  \begin{tabular}{l@{\hspace{4pt}}c@{\hspace{4pt}}c@{\hspace{4pt}}}
    \toprule
      & pubmed & citeseer \\
      \midrule
      $\env$ \# layers               & 1-3               & 1-3\\
      $\env$ dim                     & 32,64,128   & 32,64,128\\
      $\action$ \# layers            & 1-3                & 1-3\\
      $\action$ dim                  & 4,8,16         & 4,8,16\\
      temperature            & 0.01         & 0.01\\
      $\tau_0$                       & 0.1 & 0.1\\
      \# epochs                      & 2000               & 2000\\
      \midrule
      dropout                        & 0.5                & 0.5\\
      learn rate                     & $5\cdot 10^{-3}$, $10^{-2}$, $5 \cdot 10^{-2}$   & $5\cdot 10^{-3}$, $10^{-2}$, $5 \cdot 10^{-2}$\\
      learn rate decay               & $5\cdot 10^{-6}, 5\cdot 10^{-4}$                  & $5\cdot 10^{-6}, 5\cdot 10^{-4}$\\
      activation function            & ReLU               & ReLU\\
    \bottomrule
  \end{tabular}
  \label{tab:homophilic-node-classification-hps}
\end{table}}

\clearpage
\section{Visualizing the Actions}
\label{app:visu_actions}
We extend the discussion about \cognns dynamic topology over the Minesweeper dataset in \Cref{subsec:visualize} and present the evolution of the graph topology from layer $\ell=1$ to layer $\ell=8$.

\textbf{Setup.}
We train a $10$-layered $\comean$ model and present the evolution of the graph topology from layer $\ell=1$ to layer $\ell=8$. We choose a node (black), and at every layer $\ell$, we depict its neighbors up to distance $10$.  In this visualization, nodes which are mines are shown in red, and other nodes in blue. The features of non-mine nodes (indicating the number of neighboring mines) are shown explicitly whereas the nodes whose features are hidden are labeled with a question mark. For each layer $\ell$, we gray out the nodes whose information cannot reach the black node with the remaining layers available.

\begin{figure}[h]
    \centering
    \includegraphics[width=.6\textwidth]{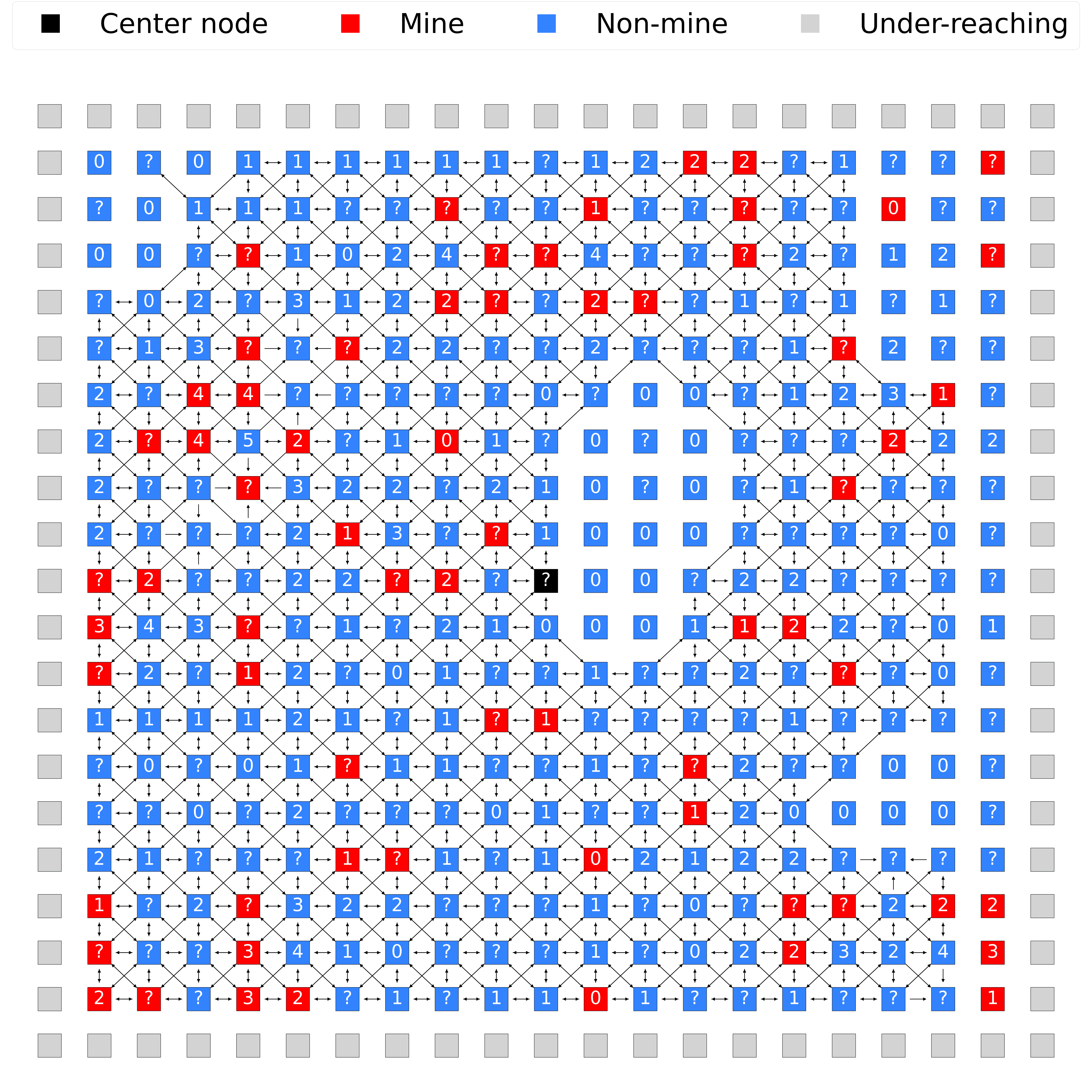}
    \caption{The 10-hop neighborhood at layer $\ell=1$.}
    \label{fig:visualization1}
    \includegraphics[width=.6\textwidth]{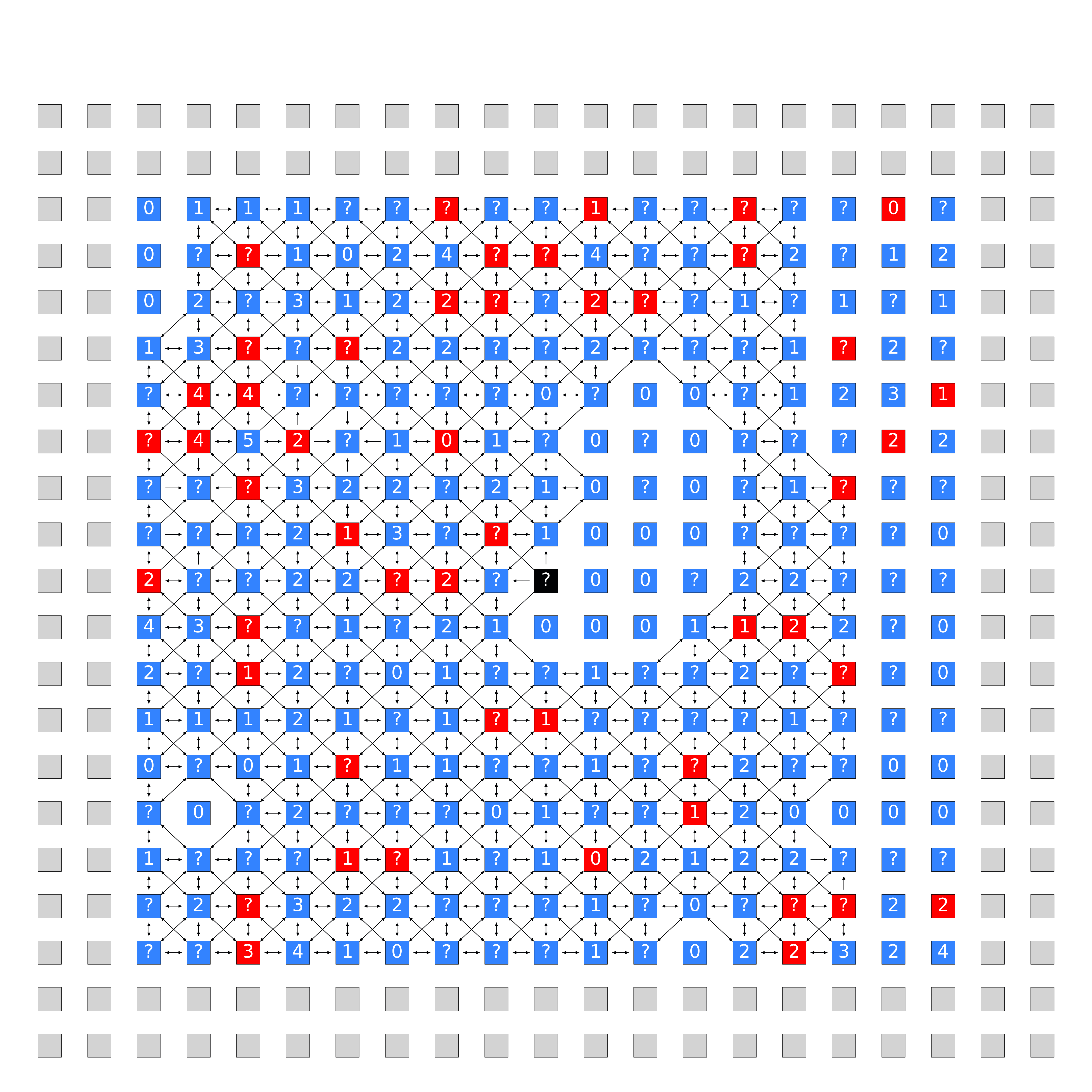}
    \caption{The 10-hop neighborhood at layer $\ell=2$.}
    \label{fig:visualization2}
\end{figure}
\begin{figure}[h]
    \centering
    \includegraphics[width=.6\textwidth]{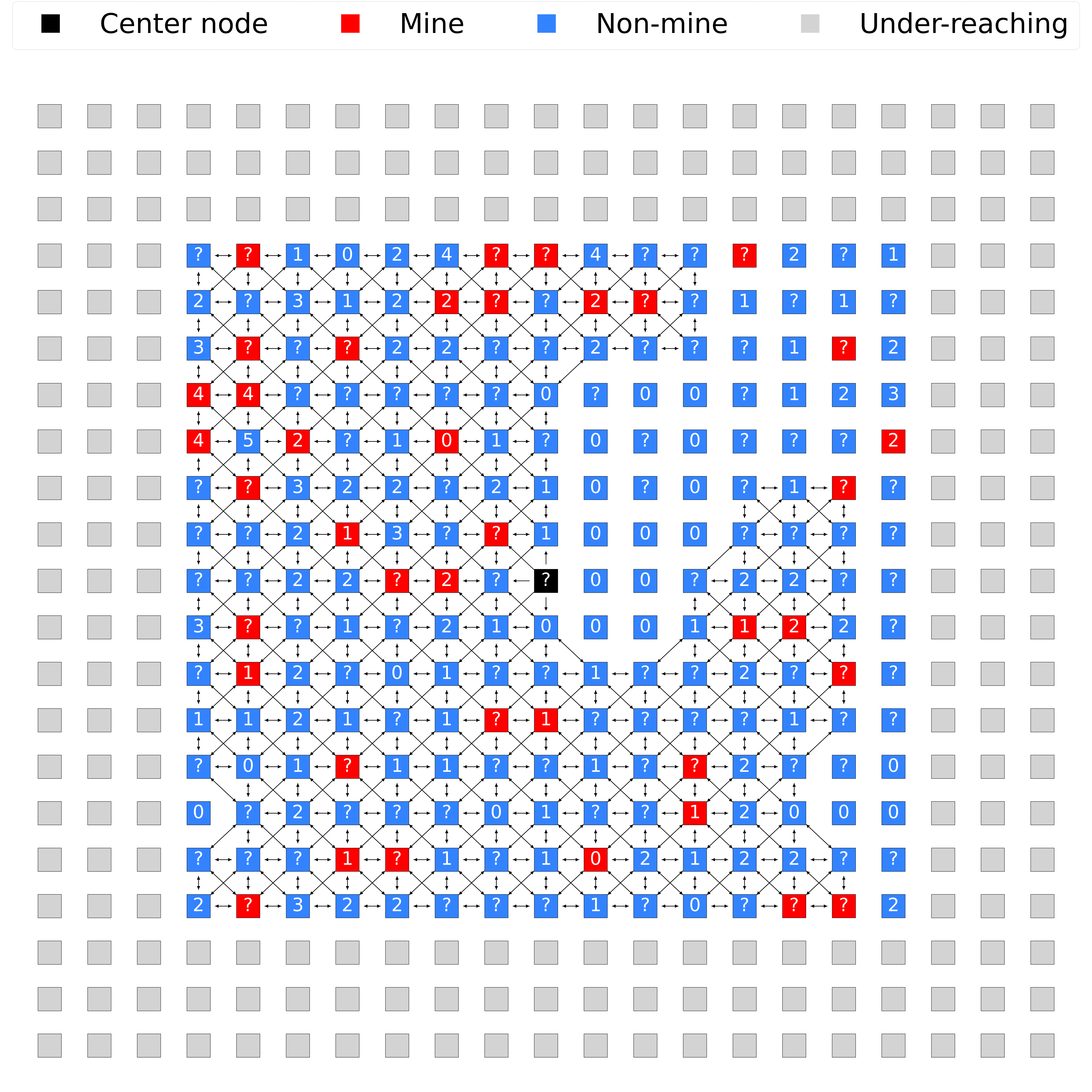}
    \caption{The 10-hop neighborhood at layer $\ell=3$.}
    \label{fig:visualization3}
    \includegraphics[width=.6\textwidth]{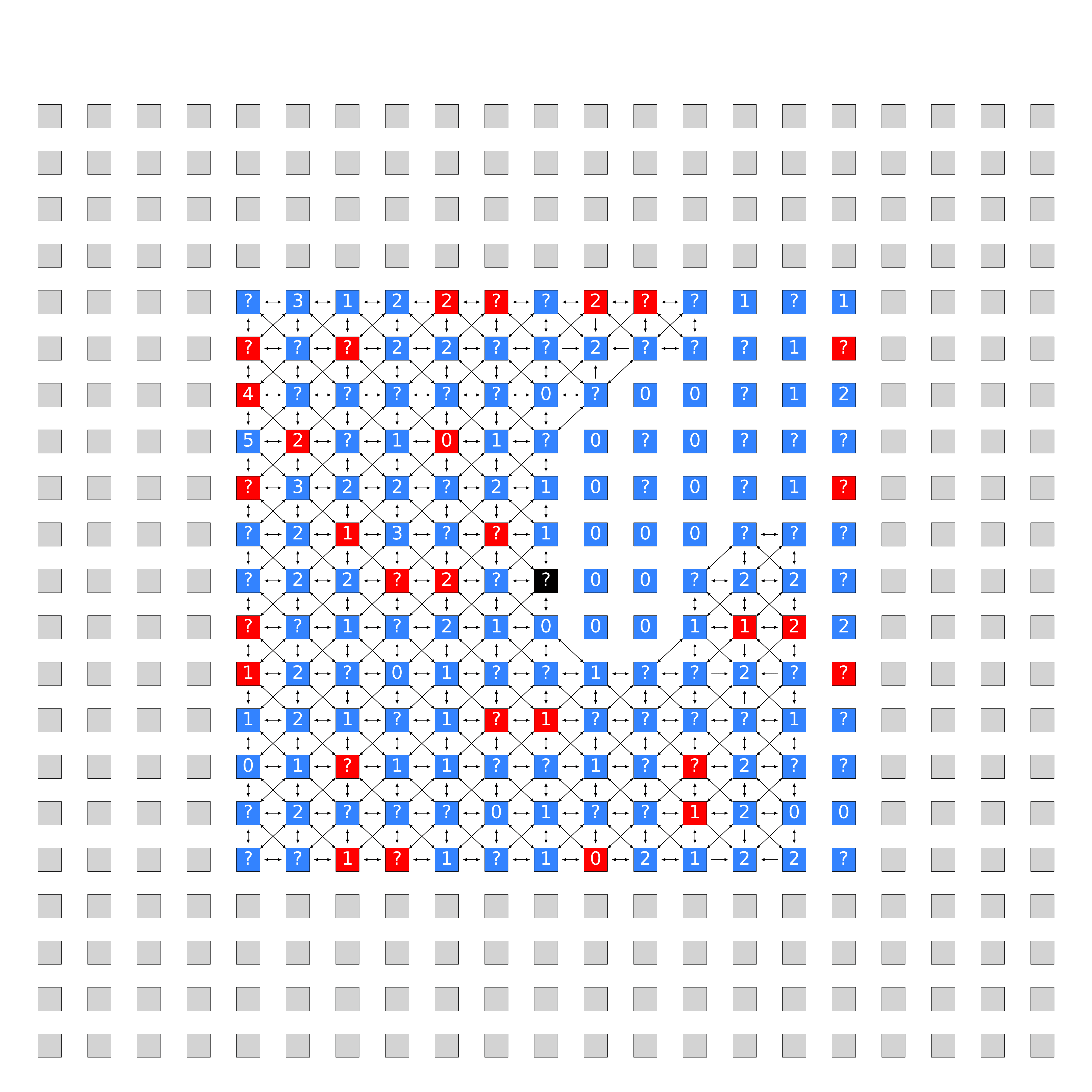}
    \caption{The 10-hop neighborhood at layer $\ell=4$.}
    \label{fig:visualization4}
\end{figure}
\begin{figure}[h]
    \centering
    \includegraphics[width=.6\textwidth]{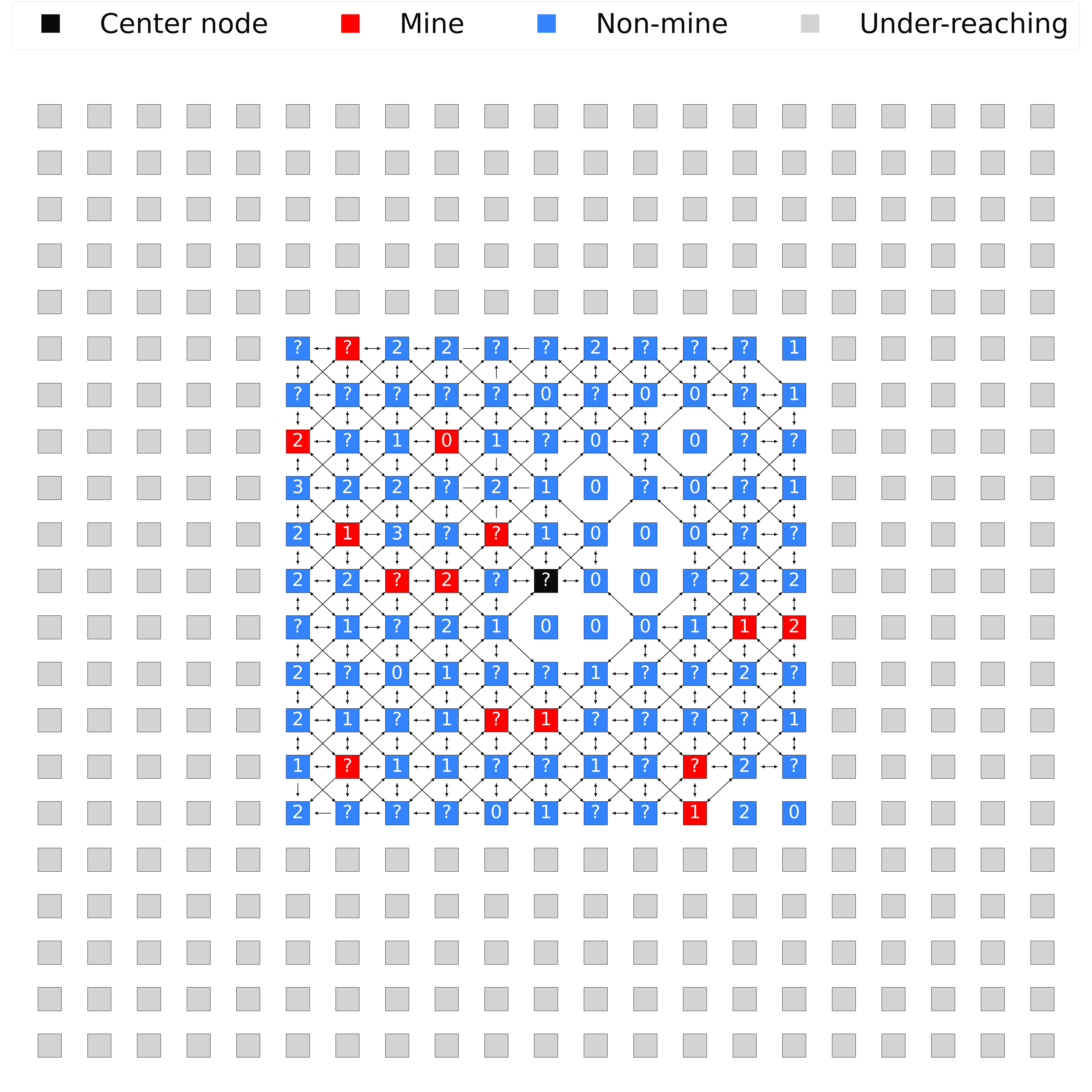}
    \caption{The 10-hop neighborhood at layer $\ell=5$.}
    \label{fig:visualization5}
    \includegraphics[width=.6\textwidth]{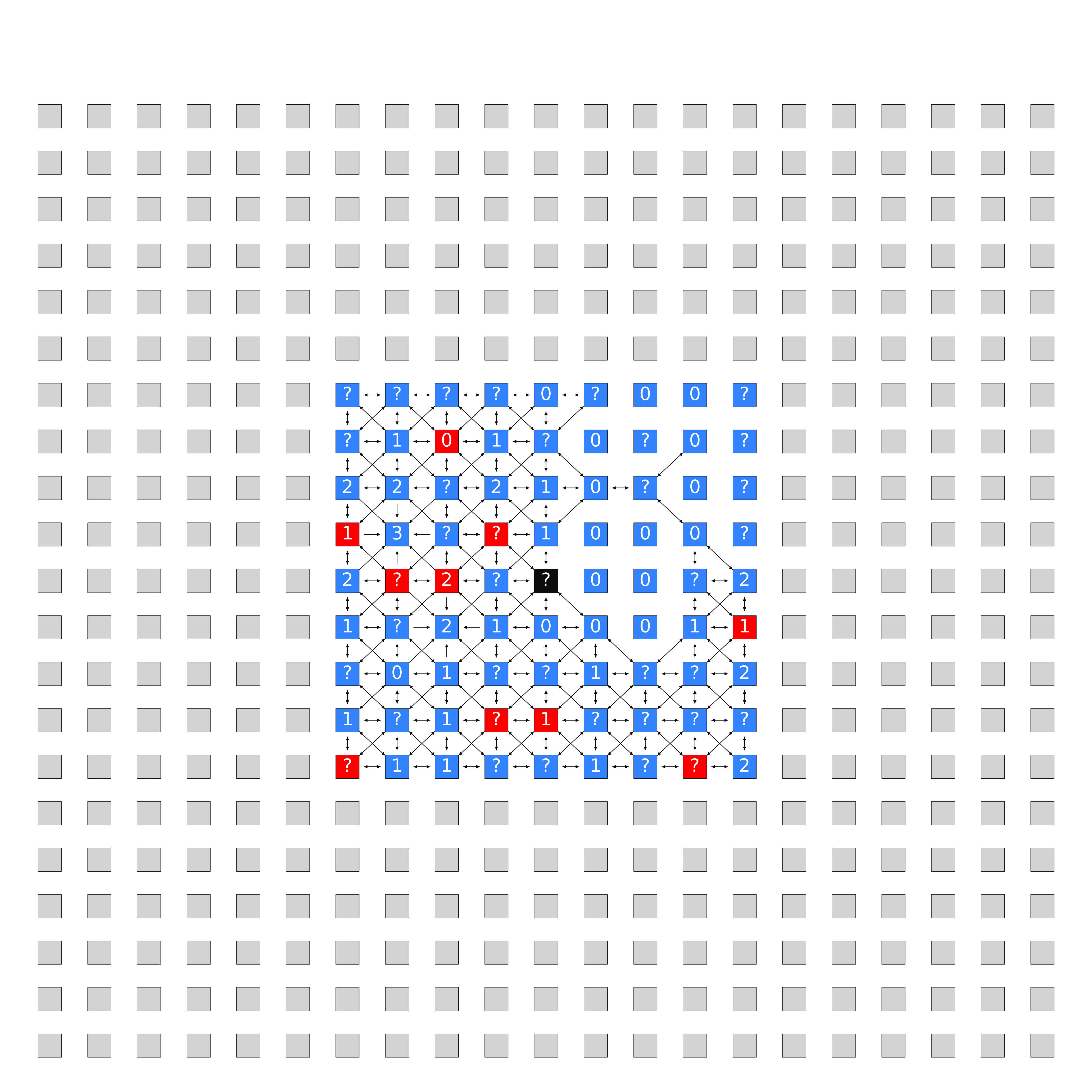}
    \caption{The 10-hop neighborhood at layer $\ell=6$.}
    \label{fig:visualization6}
\end{figure}
\begin{figure}[h]
    \centering
    \includegraphics[width=.6\textwidth]{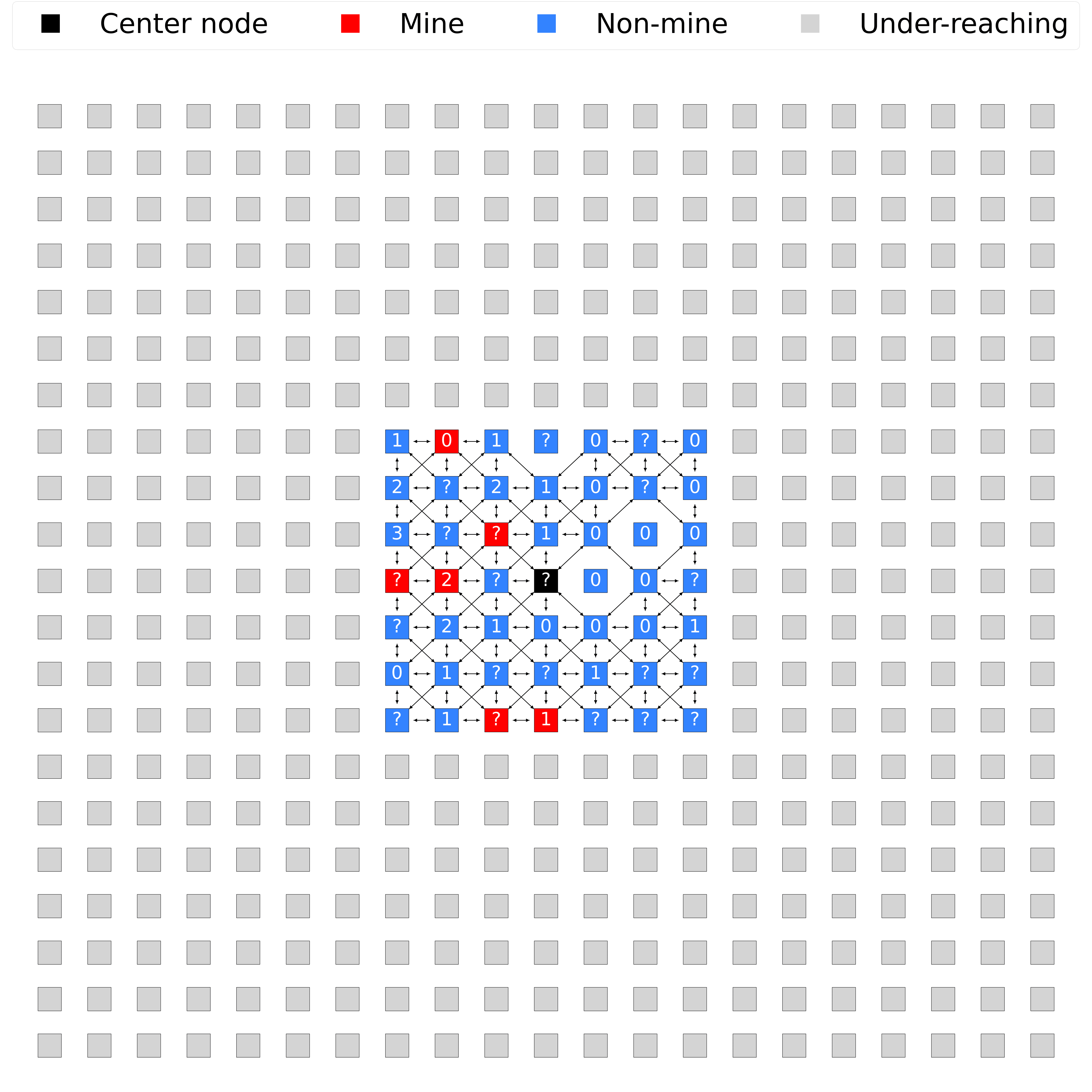}
    \caption{The 10-hop neighborhood at layer $\ell=7$.}
    \label{fig:visualization7}
    \includegraphics[width=.6\textwidth]{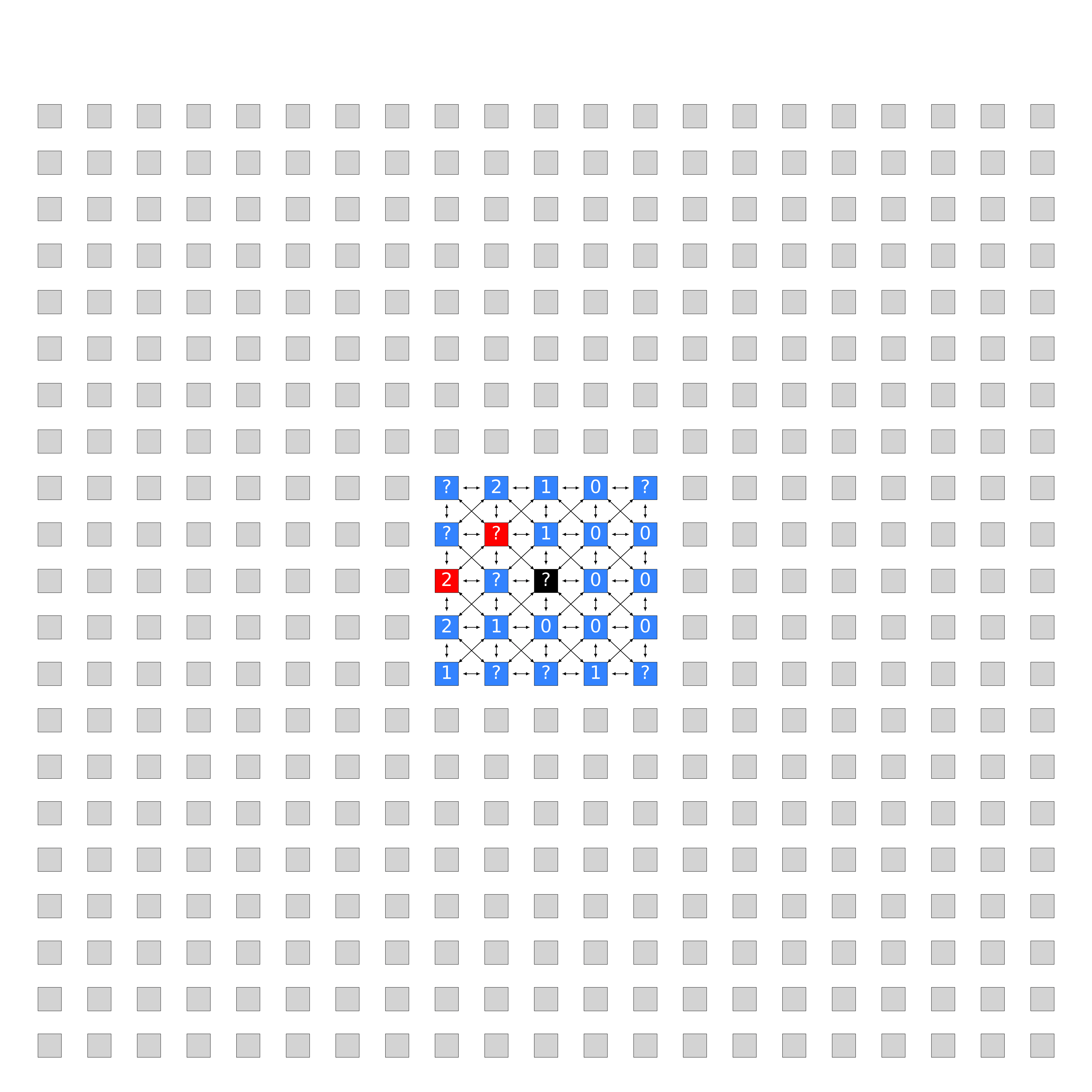}
    \caption{The 10-hop neighborhood at layer $\ell=8$.}
    \label{fig:visualization8}
\end{figure}

\end{document}